\newcommand{\removelatexerror}{\let\@latex@error\@gobble}
\newcommand{\etal}{\textit{et al}.}
\newcommand{\ie}{\textit{i}.\textit{e}.}
\newtheorem{proposition}{Proposition}
\begin{document}

\title{Semantic-Aligned  Adversarial Evolution Triangle for High-Transferability Vision-Language Attack}

\author{Xiaojun Jia~\textsuperscript{$\star$},
        Sensen Gao~\textsuperscript{$\star$},
        Qing Guo~\textsuperscript{\Letter},
        Ke Ma,
        Yihao Huang, 
        Simeng Qin,  \\
        Yang Liu,~\IEEEmembership{Senior Member,~IEEE},
        Ivor Tsang~\IEEEmembership{Fellow,~IEEE},
        and~Xiaochun Cao~\textsuperscript{\Letter}~\IEEEmembership{Senior Member,~IEEE}
\IEEEcompsocitemizethanks{
\IEEEcompsocthanksitem  Xiaojun Jia , Yihao Huang, Yang Liu are with Nanyang Technological University, Singapore (e-mail: jiaxiaojunqaq@gmail.com, huangyihao22@gmail.com, yangliu@ntu.edu.sg)
\IEEEcompsocthanksitem Sensen Gao is with the Department of Computer Vision, Mohamed Bin Zayed University of Artificial Intelligence. (e-mail: Sensen.Gao@mbzuai.ac.ae)
 \IEEEcompsocthanksitem Qing Guo and Ivor Tsang are with IHPC and CFAR, Agency for Science, Technology and Research, Singapore. (e-mail: tsingqguo@ieee.org and ivor\_tsang@cfar.a-star.edu.sg)
 \IEEEcompsocthanksitem Ke Ma is with the School of Electronic, Electrical and Communication Engineering, University of Chinese Academy of Sciences, Beijing 100049, China. (e-mail:  make@ucas.ac.cn) 
\IEEEcompsocthanksitem Simeng Qin is with Northeastern University, Shenyang, Liaoning, China (e-mail: qinsimeng670@gmail.com)
\IEEEcompsocthanksitem Xiaochun Cao is with the School of Cyber Science and Technology, Shenzhen Campus, Sun Yat-sen University, Shenzhen 518107, China. (e-mail: caoxiaochun@mail.sysu.edu.cn)
\IEEEcompsocthanksitem \textsuperscript{$\star$} Xiaojun Jia and Sensen Gao contribute equally to this work. \textsuperscript{\Letter} Qing Guo and Xiaochun Cao are corresponding authors

}
}
\markboth{Manuscript for IEEE Transactions on Pattern Analysis and Machine Intelligence}%
{Shell \MakeLowercase{\textit{et al.}}: A Sample Article Using IEEEtran.cls for IEEE Journals}


\maketitle

\begin{abstract}
 Vision-language pre-training (VLP) models excel at interpreting both images and text but remain vulnerable to multimodal adversarial examples (AEs). Advancing the generation of transferable AEs, which succeed across unseen models, is key to developing more robust and practical VLP models. Previous approaches augment image-text pairs to enhance diversity within the adversarial example generation process, aiming to improve transferability by expanding the contrast space of image-text features. However, these methods focus solely on diversity around the current AEs, yielding limited gains in transferability. To address this issue, we propose to increase the diversity of AEs by leveraging the intersection regions along the adversarial trajectory during optimization. Specifically, we propose sampling from adversarial evolution triangles composed of clean, historical, and current adversarial examples to enhance adversarial diversity. We provide a theoretical analysis to demonstrate the effectiveness of the proposed adversarial evolution triangle. Moreover, we find that redundant inactive dimensions can dominate similarity calculations, distorting feature matching and making AEs model-dependent with reduced transferability. Hence, we propose to generate AEs in the semantic image-text feature contrast space, which can project the original feature space into a semantic corpus subspace.  The proposed semantic-aligned subspace can reduce the image feature redundancy, thereby improving adversarial transferability.  Extensive experiments across different datasets and models demonstrate that the proposed method can effectively improve adversarial transferability and outperform state-of-the-art adversarial attack methods. The code is released at \href{https://github.com/jiaxiaojunQAQ/SA-AET}{https://github.com/jiaxiaojunQAQ/SA-AET}.
\end{abstract}

\begin{IEEEkeywords}
Adversarial transferability, vision-language pre-training, adversarial evolution triangle, semantic-aligned. 
\end{IEEEkeywords}

\section{Introduction}
\IEEEPARstart{V}{ision}-language pre-training (VLP) models achieve excellent performance across various downstream Vision-and-Language tasks, such as visual entailment~\cite{shi2021dense}, visual grounding~\cite{lei2021understanding}, image captioning~\cite{Hu_2022_CVPR}, and image-text retrieval~\cite{khan2021exploiting}. However, they have been found to be vulnerable to adversarial examples~\cite{zhang2022towards,lu2023setlevel,han2023ot,he2023sa,cheng2024typography,luo2024image,gao2024adversarial}.  Exploring adversarial vulnerabilities can inspire additional research dedicated to developing more robust and applicable VLP models. 

\begin{figure}[t]
    \centering
    \vspace{-5mm}
    \includegraphics[width=1.2\linewidth,bb=0 0 960 540]{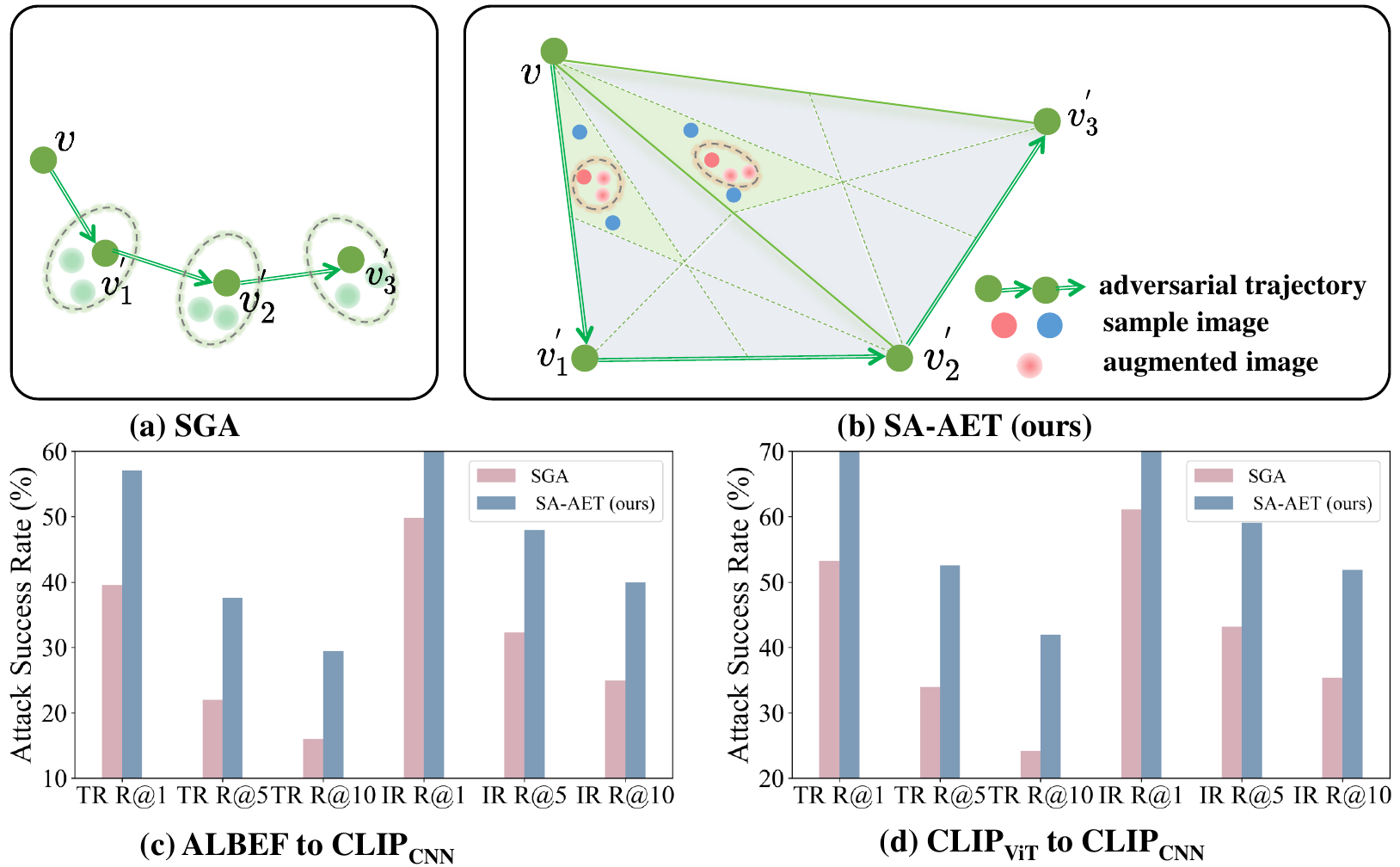}
    \caption{\textbf{Comparison of Our Method and Set-Level Guided Attack (SGA)~\cite{lu2023setlevel}.} (a) illustrates the main concept of SGA, which involves performing data augmentations around online adversarial examples. (b) demonstrates the core idea of our SA-AET, where data augmentations are applied within the adversarial sub-triangle. The red and blue dots represent images sampled from this sub-triangle, with red dots highlighting the optimal samples chosen through a text-guided adversarial example selection strategy. The surrounding light red dots represent resized augmentations applied to these optimal samples, similar to the strategy used in SGA. (c) and (d) compare the adversarial transferability of our SA-AET against SGA using adversarial examples from ALBEF \cite{li2021align} and CLIP$_\text{ViT}$~\cite{radford2021learning} to attack CLIP$_\text{CNN}$~\cite{radford2021learning}, respectively.  }
    \label{fig:home}
\end{figure}
\begin{figure*}[t]
    \centering
    \includegraphics[width=1.0\linewidth]{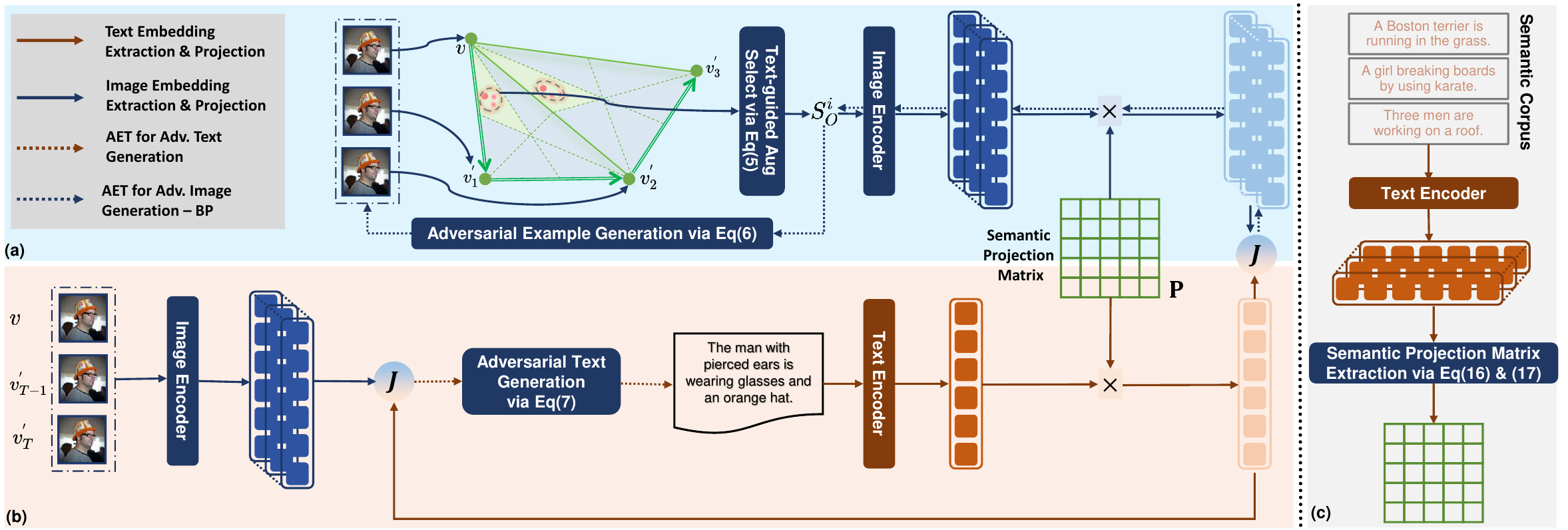}
    \caption{The Pipeline of the Proposed SA-AET: (a) Pipeline for the Adversarial Evolution Triangle (AET) in Adversarial Image Generation. (b) Pipeline for the Adversarial Evolution Triangle (AET) in Adversarial Text Generation. (c) Pipeline for Extracting the Semantic Projection Matrix. }
    \label{fig:framework}
\end{figure*}
\par Previous works mainly concentrate on exploring generating adversarial examples for VLP models in a white-box setting, in which the attacker can access model internal information, such as model parameters, etc. Some studies~\cite{gu2023survey} have shown that adversarial examples generated on a victim model can successfully attack unseen target models, a phenomenon known as adversarial transferability. Given the limited access to detailed model structures in real-world scenarios, it is crucial to investigate the transferability of multimodal adversarial examples~\cite{zhang2024does,zhu2024learning}. A series of works focus on generating adversarial examples with high transferability for VLP models. For example, Lu \textit{et al.}~\cite{lu2023setlevel} propose to improve adversarial transferability for VLP models by introducing input diversity through data augmentation. Although previous works have achieved some effectiveness in boosting adversarial transferability in vision-language attacks, they mainly focus on maximizing the contrastive loss function in the image-text feature space to generate adversarial examples and increasing their diversity along the optimization path to improve adversarial transferability. While these methods predominantly enhance diversity in online adversarial examples, they still have a lot of room for improvement in improving adversarial transferability. 
\par Specifically, as shown in Figure~\ref{fig:home}, during each optimization iteration of the attack, previous works~\cite{lu2023setlevel,he2023sa} perform data augmentation on the generated adversarial examples (\textit{i.e.,} online adversarial image) to improve transferability. This adversarial strategy increases the variety of adversarial examples throughout the optimization pathway, consequently yielding improvements in their transferability. Nonetheless, the strategy still risks overfitting the victim model due to the heavy reliance on examples from the adversarial trajectory, which results in reduced attack success rates when the adversarial examples are transferred to other VLP models.

\par To address these issues, as shown in Figure~\ref{fig:framework} (a) and (b), we first propose adopting the intersection evolution triangles along the adversarial trajectory to enhance the diversity of adversarial examples during optimization. Specifically, in each attack iteration, we propose constructing an adversarial evolution triangle that incorporates the original image, the adversarial image from the previous step, and the current adversarial image. Then, we propose to circumvent overfitting by strategic sampling within this evolution triangle, thereby avoiding excessive focus on adversarial example diversity only around adversarial images. We explore the impact of different adversarial evolution sub-triangle sampling on adversarial transferability and propose to sample from the adversarial evolution sub-triangle close to clean examples and previous adversarial examples. After obtaining the sampling samples, we generate the adversarial perturbations on the samples to stay away from the text. Subsequently, we apply these perturbations to current adversarial examples and select the one that diverges most significantly from the text. We also propose generating adversarial text that deviates from the previous adversarial evolution triangle along the optimization path rather than being distant from the last adversarial image example. We provide a theoretical analysis to illustrate the effectiveness of the proposed adversarial evolution triangle.

\par In addition, image feature embeddings usually contain much information irrelevant to text features~\cite{liang2022mind,zhou2023clip,hu2024reclip}. In the original image-text contrast space, only a limited number of feature dimensions in the image features may be activated, while the remaining dimensions remain inactive and redundant. These redundant dimensions may dominate the similarity calculation, potentially distorting the feature matching of images and texts.
This feature distortion makes the generated adversarial examples highly dependent on the victim model, reducing their transferability. Hence, generating adversarial examples within the native image-text feature contrast space increases the likelihood of overfitting the victim model, diminishing the examples' transferability. 
 To overcome this limitation, we propose generating adversarial examples within the semantic image-text feature contrast space, which maps the original feature space into a semantic corpus subspace. Specifically, as shown in Figure~\ref{fig:framework} (c), we construct a semantic subspace through a series of independent text descriptions and project the image features of the original space into the semantic subspace. Then, we maximize the contrast loss between images and text in the semantic subspace to generate adversarial perturbations.

\par By assembling the proposed methods, we conduct our vision-language attack by exploiting semantically aligned adversarial evolution triangle, \textit{i.e.,} SA-AET. We conduct a series of experiments to evaluate the effectiveness of the proposed method on the two widely used multimodal datasets, consisting of  Flickr30K~\cite{plummer2015flickr30k} and MSCOCO~\cite{lin2014microsoft}. The evaluation experiments are also conducted on three
vision-and-language downstream tasks, which include image-text retrieval (ITR), visual grounding (VG), and image captioning (IC). The experimental results indicate that the proposed method can significantly improve the transferability of multimodal adversarial examples, surpassing the state-of-the-art adversarial attack methods. Furthermore, when adversarial examples generated from ITR by the proposed method are applied to other vision-and-language downstream tasks, attack performance is significantly enhanced. Our main contributions are in five aspects:
\begin{itemize}
    \item We propose enhancing the diversity of adversarial examples during optimization by leveraging the intersection evolution triangle of adversarial trajectories, thereby improving the transferability of multimodal adversarial examples against VLP models. Furthermore, we provide a theoretical analysis to support the proposed adversarial evolution triangle. 
    \item We investigate how sampling from different adversarial evolution sub-triangles affects adversarial transferability and propose sampling from the evolution sub-triangle that is close to clean examples and previous adversarial examples. 
    \item We propose to generate the adversarial text by deviating from the final adversarial evolution triangle along the optimization trajectory rather than the final adversarial example, minimizing overfitting on the surrogate model to improve its transferability.
     \item To further enhance the transferability of adversarial examples, we propose generating them in the semantic image-text feature contrast space by mapping the original feature space onto a subspace defined by a semantic corpus.
    \item Our extensive experiments across various network architectures and datasets demonstrate that the proposed method significantly enhances the transferability of multimodal adversarial examples and outperforms state-of-the-art
    multimodal transfer adversarial attack methods. 
    
\end{itemize}

\par This paper is a journal extension of our conference paper~\cite{gao2024boosting} (called DRA). Compared to the preliminary conference version, we have made significant improvements and extensions in this version. The main differences are in four aspects: 1) In addition to sampling the intersection evolution triangle of adversarial trajectories proposed in the previous version, we explore the impact of different sampling strategies and propose to sample adversarial evolution triangles close to clean examples and previously generated adversarial examples in \textbf{Section~\ref{sec:sub_adversarial_region}}. This can further improve the transferability of multimodal adversarial examples. We add a theoretical analysis to demonstrate the effectiveness of the proposed adversarial evolution triangle in \textbf{Section~\ref{sec:theoretical}}. 2) We propose to generate the adversarial examples in the semantic image-text feature contrast space in \textbf{Section~\ref{sec:proposed_space}}, which can reduce reliance on victim models, thereby improving transferability. 3) We conduct more experiments and analyses, which include comparisons with state-of-the-art methods, ablation studies, and performance analyses. We adopt some state-of-the-art adversarial attack methods as the new comparison in \textbf{Section~\ref{sec:comparison}}. We add the ablation study versus the different proposed elements in \textbf{Section~\ref{sec:ablation}}. We analyze the effective performance of the proposed method in \textbf{Section~\ref{sec:perfromance_analysis}}. 4) We have thoroughly revised the abstract, introduction, method, experiment, and conclusion sections to offer a more detailed overview of our motivation and approach. Furthermore, we have updated all figures and tables to enhance clarity and presentation.
\section{Related Work}
In this section, we begin by discussing the existing research on vision-language pre-training models. Subsequently, we explore the studies related to downstream vision and language tasks. Finally, we introduce the research concerning the transferability of multimodal adversarial examples for vision-language pre-training models.

\subsection{Vision-Language Pre-training Models}
Vision-language pre-training (VLP) models enhance various Vision-and-Language (V+L) tasks by using multimodal learning from extensive image-text pairs \cite{li2022blip}. Initially, VLP models heavily relied on pre-trained object detectors to generate multimodal representations \cite{chen2020uniter,li2020oscar,zhang2021vinvl,wang2022vlmixer,tan2019lxmert}. However, recent developments have seen a shift towards employing end-to-end image encoders such as the Vision Transformer (ViT) \cite{dosovitskiy2010image,touvron2021training,yuan2021tokens}, which provide faster inference speeds. Consequently, some recent studies suggest replacing the computationally intensive object detectors with these more efficient image encoders \cite{dou2022empirical,li2022blip,li2021align,wang2023accelerating,yang2022vision}. The learning vision-language representations in VLP models can be divided into two categories: the fused architecture and the aligned architecture. Fused VLP models, such as ALBEF \cite{li2021align} and TCL \cite{yang2022vision}, first adopt two separate unimodal encoders to extract features from text and images. These features are combined using a multimodal encoder to create a joint representation. On the other hand, aligned VLP models, like CLIP \cite{radford2021learning}, aim to harmonize the feature spaces of different unimodal encoders, significantly enhancing performance in downstream tasks \cite{abdelfattah2023cdul}. In this paper, we focus on assessing our proposed method using various popular fused and aligned VLP models.

\subsection{Downstream Vision-and-Language Tasks}
The downstream vision-and-language tasks can be divided into three categories:
Image-text Retrieval (ITR), Visual Grounding (VG), and Image Captioning (IC). Image-text Retrieval (ITR): it involves retrieving relevant information, both textual and visual, in response to queries presented in a different modality \cite{chen2020imram,cheng2022vista,wang2019camp,zhang2020context}. Typically, it includes two sub-tasks: image-to-text retrieval and text-to-image retrieval. Specifically, for aligned VLP models, the Text Retrieval (TR) and Image Retrieval (IR) tasks utilize ranking based on the similarity between text and image embeddings. For fused VLP models, similarity scores are computed for all image-text pairs to identify the Top-N candidates since the embedding spaces of unimodal encoders are not aligned. The multimodal encoder then processes these candidates, calculating matching scores to determine the final ranking. Visual Grounding (VG): Identifying and mapping out specific areas within a visual scene corresponding to entities or concepts described in natural language. For example, in VLP models, ALBEF enhances Grad-CAM \cite{selvaraju2017grad} by leveraging the resulting attention map to prioritize the detected proposals. Image Captioning (IC): It involves creating a textual description that accurately reflects or conveys the content of a given visual input, usually through generating captions for images. To evaluate the performance of image captioning models, metrics such as BLEU \cite{naseer2021intriguing}, METEOR \cite{banerjee2005meteor}, ROUGE \cite{lin2004rouge}, CIDEr \cite{vedantam2015cider} and SPICE \cite{anderson2016spice} are commonly used. These metrics measure the quality and relevance of the generated captions by comparing them to reference captions.

\subsection{Adversarial Transferability}
Adversarial attack methods~\cite{jia2020adv,guo2020abba,guo2021learning,he2023generating,gu2023survey} can be divided into two categories: white-box attacks~\cite{jia2023improving,huang2023ala} and black-box attacks~\cite{bai2020improving,park2024hard}. The white-box attacks indicate that the attacker has full access to the model (\textit{e.g.,} model parameters and architecture), while the black-box attacks cannot. An increasing number of researchers focus on black-box attacks, as they are more practical for real-world applications where model information is often limited. In the context of image classification tasks, data augmentation techniques like TIM \cite{dong2019evading}, ADMIX \cite{wang2021admix}, and PAM \cite{zhang2023improving} are employed to enhance the transferability of adversarial examples. Recent works~\cite{lu2023setlevel,he2023sa, chung2024towards,fu2024improving} have begun to explore improving the transferability of multimodal adversarial examples for VLP models. One common method involves integrating unimodal adversarial attacks~\cite{madry2017towards,li2020bertattack} from each modality. For example, Sep-Attack~\cite{lu2023setlevel} directly combines BERT-Attack~\cite{li2020bertattack} for text and PGD~\cite{madry2017towards} for image attacks. Zhang \etal~\cite{zhang2022towards}
develop a white-box attack, called Co-Attack, to attack popular VLP models by considering cross-modal interactions. Building on this foundation, 
Lu \etal~\cite{lu2023setlevel} propose the SGA method to boost the transferability of multimodal adversarial examples by expanding a single image-text pair into diverse sets to increase adversarial example variety. However, SGA predominantly considers diversity near adversarial examples throughout the optimization, which may elevate the likelihood of overfitting to the targeted model and reduce the adversarial transferability. Consequently, this work aims to thoughtfully broaden the diversity of adversarial examples without excessively concentrating on diversity solely around adversarial images.
\section{The Proposed Method}
\begin{figure}[t]
    \centering
    \includegraphics[width=1.0\linewidth]{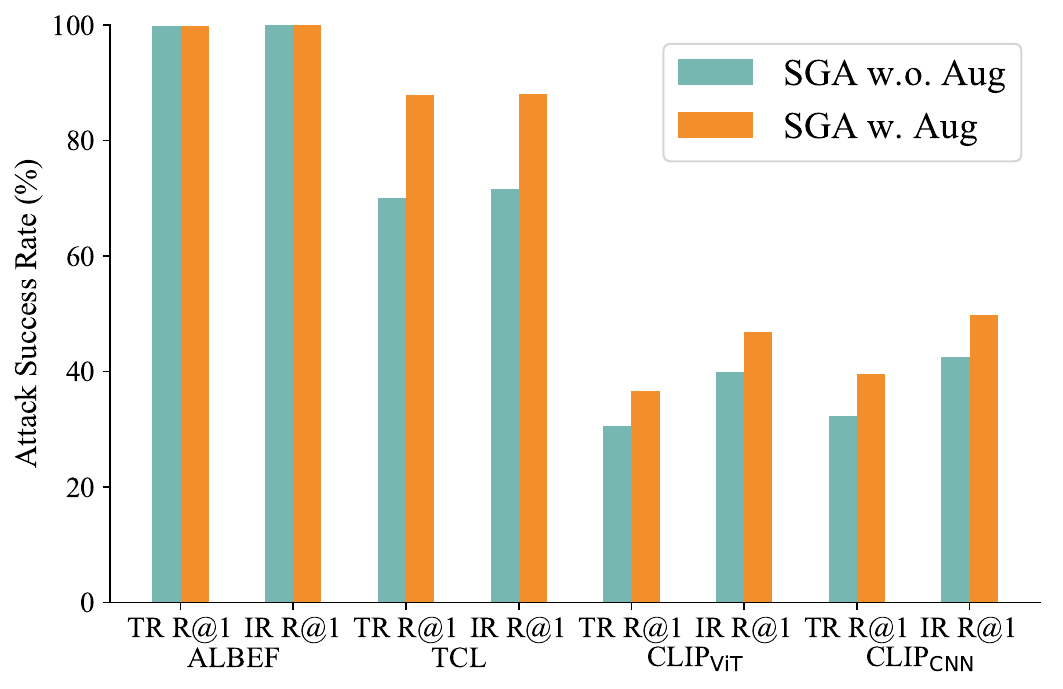}
    \caption{Attack Success Rate (\%) of SGA with and without Image Augmentation. The SGA w.o. Aug does not utilize image augmentation techniques. We employ ALBEF to generate multimodal adversarial examples.}
    \label{fig:aug_sga}
\end{figure}
\subsection{Motivation}
Adversarial attacks on VLP models typically cause a mismatch between adversarial images and their corresponding adversarial text while conforming to predefined limits on perturbations in both the image and text domains. Let $(\mathbf{x}_{I}, \mathbf{x}_{T}) \in \mathcal{D}$ denote an image-text pair extracted from a multimodal dataset $\mathcal{D}$. $(\tilde{\mathbf{x}}_{I}, \tilde{\mathbf{x}}_{T})$ represents the corresponding adversarial image-text pair in the image-text searching spaces $B\left[{\mathbf{x}}_{I}, \epsilon_I\right]$ and $B\left[{\mathbf{x}}_{T}, \epsilon_T\right]$, where $\epsilon_I$ represents the maximal perturbation bound for the image, and $\epsilon_T$ represents the maximal number of changeable words in the caption. $F_{I}(\cdot)$ and $F_{T}(\cdot)$ represent the image and text encoders of VLP models, respectively. The multimodal adversarial examples are generated by maximizing the loss function $J$ of VLP models. The objective function can be defined as:
\begin{equation}
\left\{\begin{array}{l}
\max J\left(F_I\left(\tilde{\mathbf{x}}_{I}\right), F_T\left(\tilde{\mathbf{x}}_{T}\right)\right) \\
\text {s.t.} \tilde{\mathbf{x}}_{I} \in B\left[{\mathbf{x}}_{I}, \epsilon_I\right], \tilde{\mathbf{x}}_{T} \in B\left[{\mathbf{x}}_{T}, \epsilon_T\right]. 
\end{array}\right.
\end{equation}
\par We adopt $J(\tilde{\mathbf{x}}_{I},\mathbf{x}_{T})$ to represent $J(F_I(\tilde{\mathbf{x}}_{I}), F_T(\mathbf{x}_{T}))$.
Previous works for improving the transferability of multimodal adversarial examples, specifically SGA~\cite{lu2023setlevel}, employ augmented image-text pairs to enhance the diversity of adversarial examples throughout the optimization process. During the generation of adversarial images, let $\tilde{\mathbf{x}}_{I}^{i}$ denote the adversarial image generated at the $i$-th step. 
In the next step $(i + 1)$, the SGA procedure begins by performing data augmentation (resize images to different scales) on $\tilde{\mathbf{x}}_{I}^{i}$, producing $n$ augmented examples $\left\{\tilde{\mathbf{x}}_{I}^{i 1}, \tilde{\mathbf{x}}_{I}^{i 2}, \ldots, \tilde{\mathbf{x}}_{I}^{i n}\right\}$. The whole process can be formulated as follows:
\begin{equation}
\begin{aligned}
\tilde{\mathbf{x}}_{I}^{i+1}=\underset{\mathbf{x}_{I},\epsilon_{I}}{\Pi}\left(\tilde{\mathbf{x}}_{I}^{i}+\alpha \cdot \operatorname{sign}\left(\frac{\nabla_{\mathbf{x}_{I}} \sum_{j=1}^n J(\tilde{\mathbf{x}}_{I}^{i j},\mathbf{x}_{T})}{\left\|\nabla_{\mathbf{x}_{I}} \sum_{j=1}^n J(\tilde{\mathbf{x}}_{I}^{i j},\mathbf{x}_{T})\right\|}\right)\right),
\end{aligned}
\end{equation}
where $\alpha$ represents the step size and $\underset{\mathbf{x}_{I},\epsilon_{I}}{\Pi}$ is to constrain adversarial examples $\mathbf{x}_{I}$ to stay within an $\epsilon_{I}$-radius of the clean image under the $L_{\infty}$ norm.
\par To better understand the role of image augmentation during the optimization phase in SGA, ALBEF is used as a surrogate model to generate multimodal adversarial examples. The generated adversarial examples are then applied to attack target VLP models, such as TCL and CLIP, to evaluate the transferability of the multimodal adversarial examples. The experimental results are shown in Figure~\ref{fig:aug_sga}. It can be observed that SGA can significantly boost the transferability of multimodal adversarial examples for VLP models, with improvements ranging from 6.14\% to 17.81\%. However, it is important to note that the attack success rates of the target models are still considerably lower than the source model. This discrepancy is mainly due to SGA's focus on promoting diversity around the adversarial example $\tilde{\mathbf{x}}_{I}^{i}$ during optimization while insufficiently accounting for the diversity of adversarial examples relative to the clean image. This can lead to overfitting on the victim model, thereby reducing adversarial transferability. 

\par To improve adversarial transferability, we propose to sample from adversarial evolution triangles composed of clean examples, historical adversarial examples, and current adversarial examples to boost the diversity of adversarial examples. Besides, we investigate the impact of various sampling strategies within adversarial evolution triangles on adversarial transferability and propose sampling from evolution triangles near clean examples and previously generated adversarial examples. Additionally, as for the text adversarial perturbations, our goal is to generate the adversarial perturbation by deviating from the adversarial image and adversarial evolution triangle, thereby reducing overfitting the source model and boosting the adversarial transferability. Moreover, we propose generating adversarial examples within the semantic image-text feature contrast space. This approach projects the original feature space into a semantically aligned subspace, reducing dependency on source models. The semantically coherent subspace enhances adversarial transferability across VLP models. 
\begin{figure}[t]
    \centering
    \includegraphics[width=1.0\linewidth]{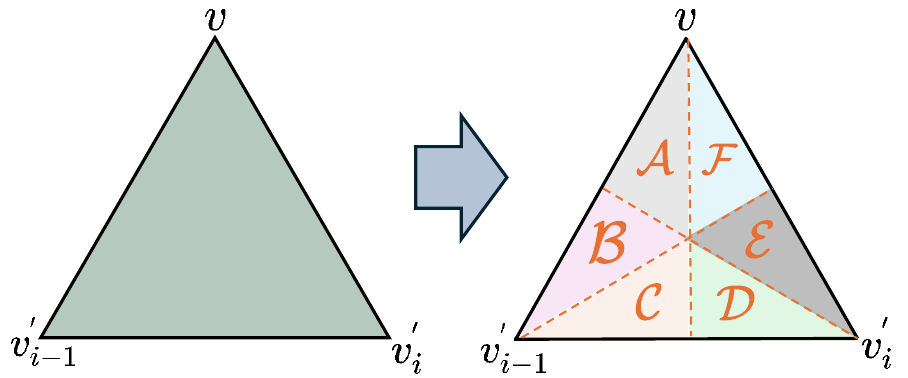}
    \caption{\textbf{Adversarial evolution sub-triangle partitioning.} $v$ represents the clean sample, $v_{i-1}^{'}$ represents the last adversarial example, and $v_{i}^{'}$ represents the current adversarial example. We conduct a more detailed investigation of this triangle by partitioning it into six sub-triangles based on the distance relationships among the clean example, the last adversarial example, and the current adversarial example.}
    \label{fig:sub_region}
\vspace{-5mm}
\end{figure}

\subsection{Adversarial Evolution Triangle for Adv Image Generation}
\label{sec:sub_adversarial_region}

\textbf{Adversarial evolution triangle (AET):} During the $i-$th iteration of the optimization, we can obtain the benign example $\mathbf{x}_{I}$, the adversarial example from the previous step $\tilde{\mathbf{x}}_{I}^{i-1}$ (previous adversarial example), and the current adversarial example $\tilde{\mathbf{x}}_{I}^{i}$. Then, these sample points together form a triangular region, referred to as the adversarial evolution triangle $\triangle \mathbf{x}_{I} \tilde{\mathbf{x}}_{I}^{i-1} \tilde{\mathbf{x}}_{I}^{i}$. 
Next, we can sample multiple instances from the adversarial evolution triangle $\triangle \mathbf{x}_{I} \tilde{\mathbf{x}}_{I}^{i-1} \tilde{\mathbf{x}}_{I}^{i}$, and denote the collection of samples as $\mathcal{S}^{i}=\left\{s_1^{i}, s_2^{i}, \ldots, s_m^{i}\right\}$. Each sample can be calculated as follows:
\begin{equation}\label{eq:region}
s_k=\lambda \cdot \mathbf{x}_{I}+\beta \cdot \tilde{\mathbf{x}}_{I}^{i-1}+\gamma \cdot \tilde{\mathbf{x}}_{I}^{i}, \text { where } \lambda+\beta+\gamma=1.0,
\end{equation}
where $\lambda$, $\beta$, and $\gamma$ represent the hyper-parameters, which decide how to do the sampling.

\par \textbf{Text-guided augmentation selection:} For our sampling set $\mathcal{S}^{i}$, an adversarial perturbation direction can be calculated for each image. The $k$-th sample is denoted as $s^{i}_{k}$. Its perturbation direction can be obtained through the following:
\begin{equation}
\epsilon^{i}_{k}=\alpha \cdot \operatorname{sign}\left(\frac{\nabla_{s^{i}_{k}} J\left(s^{i}_{k}, \mathbf{x}_{T}\right)}{\left\|\nabla_{s^{i}_{k}} J\left(s^{i}_{k}, \mathbf{x}_{T}\right)\right\|}\right).
\end{equation}
At this stage, we obtain a diverse set of perturbations, denoted as $\epsilon^{i} = \left\{\epsilon^{i}_{1}, \epsilon^{i}_{2}, \ldots, \epsilon^{i}_{m}\right\}$. To fully leverage modality interactions, we propose a text-guided augmentation selection approach to identify the optimal sample. Specifically, we integrate each element from the perturbation set $\epsilon^{i}$ into the adversarial image $\tilde{\mathbf{x}}_{I}^{i}$ individually. The objective is to determine which perturbation maximizes the distance between $\tilde{\mathbf{x}}_{I}^{i}$ and $\mathbf{x}_{T}$. This process can be formalized as follows:
\begin{equation}
o = \underset{\epsilon^{i}_{k}\in\epsilon^{i}}{\arg \max} J\left(\tilde{\mathbf{x}}_{I}^{i}+\epsilon^{i}_{k},\mathbf{x}_{T}\right).
\end{equation}
At this juncture, $s^{i}_{o}$ represents the optimal sample.

\par \textbf{Adversarial example generation:} 
Through sampling and text-guided filtering, we have obtained the optimal sample $s^{i}_{o}$ in the proposed adversarial evolution triangle $\triangle \mathbf{x}_{I} \tilde{\mathbf{x}}_{I}^{i-1} \tilde{\mathbf{x}}_{I}^{i}$. In addition, we follow SGA\cite{lu2023setlevel} as our baseline and integrate the image augmentation techniques (image resizing) during its optimization process. The chosen optimal sample $s^{i}_{o}$ is augmented and expanded into the set $\mathcal{S}^{i}_{O} = \left\{s^{i1}_{o},s^{i2}_{o}, \ldots, s^{in}_{o} \right\}$. Subsequently, we use the extended set $\mathcal{S}^{i}_{O}$ to generate adversarial perturbation and update $\tilde{\mathbf{x}}_{I}^{i}$ to $\tilde{\mathbf{x}}_{I}^{i+1}$. This process can be formalized as follows:
\begin{equation}
\tilde{\mathbf{x}}_{I}^{i+1}=\underset{\mathbf{x}_{I},\epsilon_{I}}{\Pi}\left(\tilde{\mathbf{x}}_{I}^{i}+\alpha \cdot \operatorname{sign}\left(\frac{\nabla_{s^{i}_{o}} \sum_{j=1}^n J\left(s^{ij}_{o}, \mathbf{x}_{T}\right)}{\left\|\nabla_{s^{i}_{o}} \sum_{j=1}^n J\left(s^{ij}_{o}, \mathbf{x}_{T}\right)\right\|}\right)\right).
\end{equation}


\par \textbf{Adversarial evolution sub-triangle:} To explore the impact of different adversarial evolution sub-triangle sampling on adversarial transferability, as shown in Figure~\ref{fig:sub_region}, we divide the entire adversarial evolution triangle into six evolution sub-triangles based on their proximity to three key points: the clean example, the previous adversarial example, and the current adversarial example. Each evolution sub-triangle is characterized by which of these points is the nearest and which is the second nearest. The evolution sub-triangles are defined as follows:
\begin{figure}[t]
    \centering
    \includegraphics[width=1.0\linewidth]{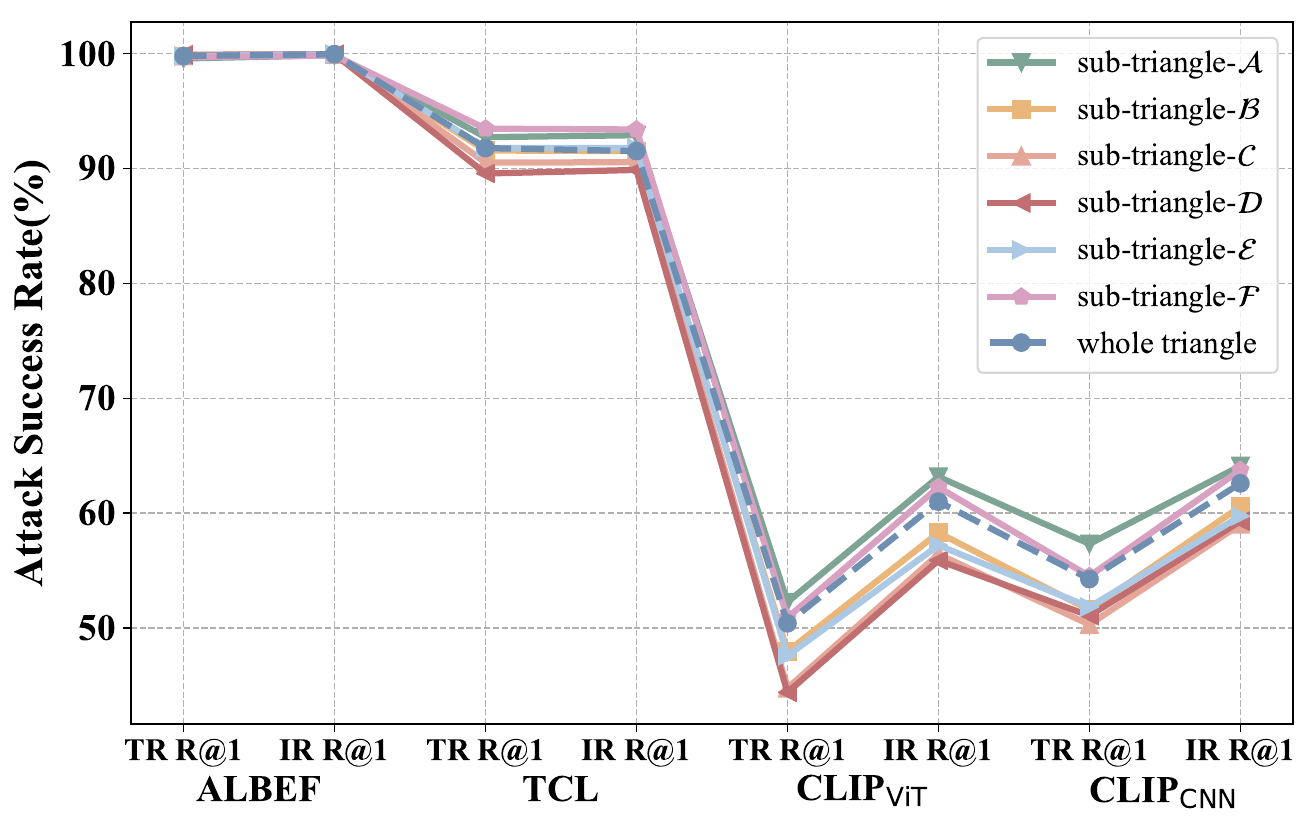}
    \caption{Attack Success Rate (\%) of different adversarial evolution sub-triangles, which is used to boost the diversity of adversarial examples for improving adversarial transferability.}
    \label{fig:sub_region_results}
\end{figure}
\begin{itemize}
  \item \textbf{Sub-triangle-$\mathcal{A}$:} 
  Nearest to the clean example; second nearest to the previous adversarial example.
  \item \textbf{Sub-triangle-$\mathcal{B}$:} 
  Nearest to the previous adversarial example; second nearest to the clean example.
  \item \textbf{Sub-triangle-$\mathcal{C}$:} Nearest to the previous adversarial example; second nearest to the current adversarial example.
  \item \textbf{Sub-triangle-$\mathcal{D}$:} Nearest to the current adversarial example; second nearest to the previous adversarial example.
  \item \textbf{Sub-triangle-$\mathcal{E}$:} Nearest to the current adversarial example; second nearest to the clean example.
  \item \textbf{Sub-triangle-$\mathcal{F}$:} Nearest to the clean example; second nearest to the current adversarial example.
\end{itemize}


We partition the adversarial evolution triangle into six distinct evolution sub-triangles by adjusting the hyperparameters $\beta$, $\gamma$, and $\lambda$. Specifically, these evolution sub-triangles are defined by all possible strict orderings of the three parameters:
sub-triangle-$\mathcal{A}$ is defined by $\gamma < \beta < \lambda$, sub-triangle-$\mathcal{B}$ by $\gamma < \lambda < \beta$, sub-triangle-$\mathcal{C}$ by $\lambda < \gamma < \beta$, sub-triangle-$\mathcal{D}$ by $\lambda < \beta < \gamma$, sub-triangle-$\mathcal{E}$ by $\beta < \lambda < \gamma$, and sub-triangle-$\mathcal{F}$ by $\beta < \gamma < \lambda$.
To study the impact of the adversarial evolution sub-triangles on adversarial transferability, we adopt ALBEF as the source model to generate multimodal adversarial examples and evaluate the adversarial transferability on other VLP models. The experiment results are shown in Figure~\ref{fig:sub_region_results}. Analyses are as follows. Different adversarial evolution sub-triangles can achieve different performances of adversarial transferability. Specifically, the transferability performance of sub-triangle-$\mathcal{C}$ is the lowest among all, while sub-triangle-$\mathcal{A}$ demonstrates higher transferability performance compared to both the whole triangle and the other sub-triangles. The results indicate that focusing sampling efforts on the sub-triangle near clean and previous adversarial examples significantly enhances adversarial transferability. 

\subsection{Adversarial Evolution Triangle for Adv Text Generation} 
As for the adversarial text generation, previous works generate adversarial texts in an iterative process by only deviating from the final generated adversarial image. Specifically, given the total iterations $T$ and the final adversarial image $\tilde{\mathbf{x}}_{I}^{T}$, the adversarial text $\tilde{\mathbf{x}}_{T}$ is generated by deviating from the features of the final adversarial image. Since the final adversarial example completely depends on the source model, the generated adversarial text is only far away from the final adversarial example, resulting in the final multimodal adversarial example overfitting the source model. To address this issue, we propose to deviate from the last adversarial evolution triangle for generating the adversarial text. The evolution triangle consists of the clean image ${\mathbf{x}}_{I}$, the previous adversarial example $\tilde{\mathbf{x}}_{I}^{T-1}$, and the current adversarial example $\tilde{\mathbf{x}}_{I}^{T}$. The adversarial text can be calculated as follows: 
\begin{equation}
\label{eq:text-deviate}
\begin{split}
\tilde{\mathbf{x}}_{T}  =  \underset{\tilde{\mathbf{x}}_{T} \in B\left[{\mathbf{x}}_{T}, \epsilon_t\right]}{\arg \max } & \left(\kappa \cdot J\left({\mathbf{x}}_{I}, \tilde{\mathbf{x}}_{T}\right) +\mu \cdot J\left(\tilde{\mathbf{x}}_{I}^{T-1},\tilde{\mathbf{x}}_{T}\right) \right.\\ &\left. +\nu \cdot J\left(\tilde{\mathbf{x}}_{I}^{T}, \tilde{\mathbf{x}}_{T}\right)\right),
\end{split}
\end{equation}
where $\kappa$, $\mu$, and $\nu$ represent the hyper-parameters used to generate the adversarial text. They are constrained such that $\kappa + \mu + \nu =1.0$. 
\begin{table*}[t]
\centering
\caption{Attack Success Rate (\%) of the proposed method with different proportions of text data in the dataset, which are used to generate the projection matrix. We use ALBEF to generate multimodal adversarial examples on the Flickr30K to evaluate adversarial transferability.}
\label{table:Proportion}
\setlength\tabcolsep{0.4cm}

\begin{tabular}{@{}ccccccccc@{}}
\toprule
\multirow{2}{*}{Proportion} & \multicolumn{2}{c}{ALBEF}       & \multicolumn{2}{c}{TCL}         & \multicolumn{2}{c}{CLIP$_\mathrm{ViT}$}   & \multicolumn{2}{c}{CLIP$_\mathrm{CNN}$}    \\ \cmidrule(l){2-9} 
                            & TR R@1         & IR R@1         & TR R@1         & IR R@1         & TR R@1        & IR R@1         & TR R@1         & IR R@1         
                            \\ \midrule
DRA~\cite{gao2024boosting}                    & 99.90           & 99.93          & 91.57          & 91.17          & 46.26         & 56.80           & 49.55          & 59.01          
\\ \midrule \midrule
Baseline & 99.90	& 99.93	& 91.57	& 91.17	& 46.26	& 56.80 & 49.55	& 59.01 \\
10\% & 100.0 & 100.0 & 94.73 & \textbf{94.69} & 47.73 & 58.96 & 53.38 & 61.51\\
20\% & 100.0 & 100.0 & 94.52 & 94.29 & 47.12 & 57.96 & 53.26 & 60.65\\
30\% & 100.0 & 100.0 & \textbf{95.15} & 94.26 & 47.98 & 58.02 & 53.51 & 60.79\\
\cellcolor{gray!40} 40\% & \cellcolor{gray!40}  \textbf{100.0} & \cellcolor{gray!40} 
 \textbf{100.0} & \cellcolor{gray!40}  94.84 & \cellcolor{gray!40} 
 94.10 & \cellcolor{gray!40} \textbf{49.08} & \cellcolor{gray!40} 
 \textbf{58.47} & \cellcolor{gray!40} \textbf{53.51} & \cellcolor{gray!40}  \textbf{61.51}\\
50\% & 100.0 & 100.0 & 94.52 & 94.19 & 48.34 & 57.51 & 51.47 & 61.20 \\
60\% & 100.0 & 100.0 & 95.15 & 94.05 & 47.24 & 58.25 & 52.11 & 60.82 \\
70\% & 100.0 & 100.0 & 94.73 & 94.55 & 47.85 & 57.64 & 52.49 & 60.62\\
80\% & 100.0 & 100.0 & 95.15 & 94.45 & 47.85 & 58.27 & 50.45 & 60.48\\
90\% & 100.0 & 100.0 & 94.42 & 94.26 & 47.24 & 57.64 & 52.11 & 60.48\\
100\% & 100.0 & 100.0 & 94.94 & 94.17 & 48.71 & 58.18 & 52.62 & 59.86\\
\bottomrule
\end{tabular}
\end{table*}

\subsection{Theoretical Analysis}
\label{sec:theoretical}
Here we provide theoretical analysis to show that the proposed method can improve the transferability of adversarial examples compared with the SGA~\cite{lu2023setlevel}. Wang \textit{et al.}~\cite{DBLP:conf/iclr/WangRLZ0Z21} introduce the Shapley value \cite{shapley:book1952} to analyze the interactions inside adversarial perturbations. They discover that the adversarial transferability and the interaction inside adversarial perturbation have a negative correlation, \textit{i.e.}, the adversarial examples with smaller interactions have better black-box transferability. These observations are also confirmed by the follow-up work like \cite{DBLP:journals/tip/WeiCGWGJD23,yu2023reliable,DBLP:journals/pami/YuanZJLS24}. We give the Shapley interaction indices of the proposed method and the SGA~\cite{lu2023setlevel} by the following theorem. 
\begin{restatable}[]{theorem}{goldbach}
    \label{thm:goldbach}
    The adversarial perturbations $\{\boldsymbol{\delta}_{t}\}$ generated by the proposed method are given as
    \begin{equation}
        \label{eq:delta_update}
        \boldsymbol{\delta}_{t} = \sum_{i=1}^{t} \boldsymbol{g}_i,\ \ t=2,\ 3,\ \dots\ ,
    \end{equation}
    where
    \begin{equation}
        \label{eq:g_update}
        \boldsymbol{g}_t = 
        \begin{cases}
            g(\boldsymbol{x}+\beta\cdot\boldsymbol{\delta}_{t-2}+\gamma\cdot \boldsymbol{\delta}_{t-1}),&\ \text{ if } t\geq 2,\\
            g(\boldsymbol{x}),&\ \text{ if }t=1,
        \end{cases}
    \end{equation}
    $g(\cdot)$ is the gradient of loss function $L(\cdot)$, $\beta,\ \gamma\in[0,1]$ are given constants, and $\boldsymbol{\delta}_{0} = (0,\dots,0)$. Meanwhile, the adversarial perturbations generated by the SGA~\cite{lu2023setlevel} $\{\boldsymbol{\zeta}_{t}\}$ \footnote{SGA~\cite{lu2023setlevel} can be treated as a special case of the proposed method as $\beta = 0$ and $\gamma = 1$.} are
    \begin{equation}
        \boldsymbol{\zeta}_{t} = \sum_{i=1}^{t} \boldsymbol{h}_i,\ \boldsymbol{h}_i = g(\boldsymbol{x}+\boldsymbol{\zeta}_{i-1}),\ t=1,\ 2,\ \dots\ .
    \end{equation}
    Then the interaction inside adversarial perturbation $\boldsymbol{\delta}_{t}$ will be
    \begin{equation}
        \label{eq:ours}
        \begin{aligned}
             & \mathbb{E}_{i,j}[\boldsymbol{I}_{i,j}(\boldsymbol{\delta}_{t})] &=&\ \ (\beta+\gamma)B\cdot t^3 + (A-2\beta B)\cdot t^2\\
             & & &\ \ + (2A-(\beta+\gamma)B)\cdot t\\
             & & &\ \ + A + 2\beta B,
        \end{aligned}
    \end{equation}
    where 
    \begin{equation}
        \begin{aligned}
            & A &=&\ \ \mathbb{E}_{i,j}[\boldsymbol{g}(i)\cdot\boldsymbol{g}(j)\cdot\boldsymbol{H}_{i,j}],\\
            & B &=&\ \ \boldsymbol{\mathbb{E}}_{i,j}[\boldsymbol{g}(i)\cdot\boldsymbol{H}_{i,j}\cdot\boldsymbol{g}^\top\boldsymbol{H}_{*j}],
        \end{aligned}
    \end{equation}
    $\boldsymbol{g}=[\boldsymbol{g}(1),\dots,\boldsymbol{g}(n)]$ is the gradient of $L(\boldsymbol{x})$, $\boldsymbol{H}_{i,j}$ is the $(i,j)$ element of the Hessian matrix $\boldsymbol{H}$ for $L(\boldsymbol{x})$, and $\boldsymbol{H}_{*j}$ is the $j$-th column of $\boldsymbol{H}$. Moreover, the interaction inside $\boldsymbol{\zeta}_{t}$
    \begin{equation}
        \label{eq:competitor}
        \mathbb{E}_{i,j}[\boldsymbol{I}_{i,j}(\boldsymbol{\zeta}_{t})] = B\cdot t^3 + (A-B)\cdot t^2.
    \end{equation}
\end{restatable}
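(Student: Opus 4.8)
The plan is to reduce both perturbation families to explicit closed forms in the gradient $\boldsymbol{g}$ and Hessian $\boldsymbol{H}$ of $L$ at the clean point $\boldsymbol{x}$, and then evaluate the Shapley interaction on those closed forms. The starting point is the second-order characterization underlying Wang \etal~\cite{DBLP:conf/iclr/WangRLZ0Z21}: modeling $L$ around $\boldsymbol{x}$ by its quadratic Taylor polynomial, the linear terms cancel in the discrete second difference defining the pairwise interaction, so the interaction of a perturbation $\boldsymbol{\delta}$ collapses to the Hessian cross term, $\mathbb{E}_{i,j}[\boldsymbol{I}_{i,j}(\boldsymbol{\delta})] = \mathbb{E}_{i,j}[\boldsymbol{\delta}(i)\,\boldsymbol{H}_{i,j}\,\boldsymbol{\delta}(j)]$. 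I would take this identity as the computational definition of the interaction and drive everything through it.

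First I would linearize the gradient map, $g(\boldsymbol{x}+\boldsymbol{u}) = \boldsymbol{g} + \boldsymbol{H}\boldsymbol{u} + O(\lVert\boldsymbol{u}\rVert^2)$, and solve both recursions to first order in $\boldsymbol{H}$ by induction on $t$, splitting each iterate into a zeroth- and a first-order part. The zeroth-order part of every $\boldsymbol{g}_t$ is just $\boldsymbol{g}$, so $\boldsymbol{\delta}_t = t\,\boldsymbol{g} + (\text{first order})$; substituting the zeroth-order iterates $\boldsymbol{\delta}_{s}\approx s\boldsymbol{g}$ into the argument of $g$ gives the per-step first-order coefficient $\beta(t-2)+\gamma(t-1)$ for $t\geq 2$ and $0$ for $t=1$. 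Summing these over steps yields $\boldsymbol{\delta}_t = t\,\boldsymbol{g} + d_t\,\boldsymbol{H}\boldsymbol{g}$ with $d_t = \tfrac{t-1}{2}[(\beta+\gamma)t - 2\beta]$, and the identical argument with $\beta=0,\gamma=1$ gives $\boldsymbol{\zeta}_t = t\,\boldsymbol{g} + \tfrac{t(t-1)}{2}\,\boldsymbol{H}\boldsymbol{g}$.

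Next I would substitute these closed forms into the interaction identity, expand $\boldsymbol{\delta}_t(i)\,\boldsymbol{\delta}_t(j)$, and group by powers of $\boldsymbol{H}$. The $\boldsymbol{g}$--$\boldsymbol{g}$ term contributes $t^2 A$; the two mixed $\boldsymbol{g}$--$\boldsymbol{H}\boldsymbol{g}$ terms each contribute a multiple of $B$ once the symmetry of $\boldsymbol{H}$ is used to identify both with $\mathbb{E}_{i,j}[\boldsymbol{g}(i)\,\boldsymbol{H}_{i,j}\,\boldsymbol{g}^\top\boldsymbol{H}_{*j}]$; and the $\boldsymbol{H}\boldsymbol{g}$--$\boldsymbol{H}\boldsymbol{g}$ term is $O(\boldsymbol{H}^3)$ and is discarded. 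Collecting powers of $t$ out of $t^2 A + 2\,t\,d_t\,B$ then produces the claimed cubic in $t$. A self-consistency check I would run before trusting the coefficients is that setting $\beta=0,\gamma=1$ collapses $d_t$ to $\tfrac{t(t-1)}{2}$ and the whole expression to $B t^3+(A-B)t^2$, recovering the separately stated interaction of $\boldsymbol{\zeta}_t$ (consistent with the footnote that SGA is the $\beta=0,\gamma=1$ special case).

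The main obstacle is the bookkeeping in the second-order recursion for $\boldsymbol{\delta}_t$: unlike SGA it depends on the two previous iterates $\boldsymbol{\delta}_{t-2}$ and $\boldsymbol{\delta}_{t-1}$, so I must carefully keep only the zeroth-order iterates inside $\boldsymbol{H}(\cdot)$ to isolate the genuine first-order contributions, treat the $t=1$ boundary term separately, and rigorously justify discarding all $O(\boldsymbol{H}^3)$ terms in the interaction (equivalently, truncating $L$ at second order). Once the coefficients are fixed, the concluding step is the comparison that motivates the theorem: since $\lambda+\beta+\gamma=1$ with $\lambda\geq 0$ forces $\beta+\gamma\leq 1$, the leading cubic coefficient $(\beta+\gamma)B$ of the proposed method is no larger than the coefficient $B$ of SGA, so by the established negative correlation between interaction magnitude and transferability the proposed perturbations are predicted to transfer at least as well.
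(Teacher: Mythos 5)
Your derivation is correct and follows essentially the same route as the paper's own proof: linearize the gradient map as $g(\boldsymbol{x}+\boldsymbol{u})\approx \boldsymbol{g}+\boldsymbol{H}\boldsymbol{u}$, establish by induction the closed forms $\boldsymbol{\delta}_t = t\,\boldsymbol{g} + d_t\,\boldsymbol{H}\boldsymbol{g}$ and $\boldsymbol{\zeta}_t = t\,\boldsymbol{g} + \tfrac{t(t-1)}{2}\,\boldsymbol{H}\boldsymbol{g}$, then substitute into the truncated Shapley interaction $\mathbb{E}_{i,j}[\boldsymbol{\delta}_t(i)\,\boldsymbol{H}_{i,j}\,\boldsymbol{\delta}_t(j)]$, discard the $\boldsymbol{H}\boldsymbol{g}$--$\boldsymbol{H}\boldsymbol{g}$ term, and obtain $t^2A + 2t\,d_t\,B$. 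Your coefficient $d_t=\tfrac{t-1}{2}\left[(\beta+\gamma)t-2\beta\right]$ is identical to the paper's $d_t=\tfrac{(t-1)(t-2)}{2}\beta+\tfrac{t(t-1)}{2}\gamma$, so your intermediate results coincide exactly with the paper's Proposition (Update Rules) and its appendix computation, including the handling of the two mixed terms via symmetry.

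One concrete caveat, however. Expanding $t^2A+2t\,d_t\,B$ yields $(\beta+\gamma)B\cdot t^3 + (A-3\beta B-\gamma B)\cdot t^2 + 2\beta B\cdot t$, which is what the paper's appendix proof also arrives at---but it is \emph{not} the expression \eqref{eq:ours} printed in the theorem statement, namely $(\beta+\gamma)B t^3+(A-2\beta B)t^2+(2A-(\beta+\gamma)B)t+A+2\beta B$. Your assertion that collecting powers of $t$ ``produces the claimed cubic'' is therefore false as written, and your own self-consistency check is precisely the tool that detects this: at $\beta=0,\ \gamma=1$ your derived form correctly collapses to $Bt^3+(A-B)t^2$, matching \eqref{eq:competitor}, whereas the printed \eqref{eq:ours} would give $Bt^3+At^2+(2A-B)t+A$, contradicting the footnote that SGA is the $\beta=0,\ \gamma=1$ special case. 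The inconsistency lies in the paper's stated theorem (its appendix proof agrees with you), so this is not a flaw in your mathematics---but a careful write-up should have flagged the mismatch instead of claiming agreement with the stated formula. The qualitative conclusion you draw, that $\beta+\gamma<1$ makes the leading coefficient $(\beta+\gamma)B$ smaller than SGA's $B$ and hence predicts better transferability, holds for both versions and matches the paper's discussion following the theorem.
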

Compared \eqref{eq:ours} with \eqref{eq:competitor}, it is clear that the interaction inside adversarial perturbation generated by the proposed method is lower than SGA~\cite{lu2023setlevel} as $\beta+\gamma<1$ and $t\geq 1$. Consequently, we know the proposed method will have better black-box transferability against the models with unknown parameters. We provide the proof details in the supplementary material.

\subsection{Contrast Space Optimization for Semantic-aligned AET}
\label{sec:proposed_space}
We generate multimodal adversarial examples by optimizing the maximization of the loss function $J$ of VLP models. Especially given the visual embeddings $F_I(\mathbf{x}_{I}) \in \mathbb{R}^d$,  and the text embeddings $F_T(\mathbf{x}_{T}) \in \mathbb{R}^d$, where $d$ represents the size of the feature embeddings, the loss function, abbreviated as $J$, in the original feature space can be calculated as:
\begin{equation}
J = \frac{F_I\left({\mathbf{x}}_{I}\right) \cdot F_T\left({\mathbf{x}}_{T}\right)}{d},
\end{equation}                                                             
where $\cdot$ represents the dot product of vectors, used to calculate the similarity between each pair of image and text features.

\par Previous works~\cite{zhou2023clip,hu2024reclip} have indicated that image feature embeddings often carry substantial information unrelated to text features. In the original image-text contrast space, only a small subset of feature dimensions within the image embeddings is typically active, while the rest remain inactive and redundant. These redundant dimensions can impact similarity calculations, potentially distorting the feature alignment between images and text. The feature distortion may cause the generated adversarial examples to rely highly on the specific victim model, reducing their transferability.
Hence, we propose a semantic
image-text feature contrast space to generate multi-mode adversarial examples. It maps the original
feature space into a semantic corpus subspace by using a semantic projection matrix. As shown in Figure~\ref{fig:framework} (c), assuming we have a $d$ dimensional semantic representation space $\mathbb{R}^{d}$, we have $\mathcal{N}$ semantic texts $U=\left[c_1, c_2, \ldots, c_\mathcal{N}\right]$. The text feature embeddings of those texts can be calculated as follows:
\begin{equation}
\mathcal{T}=\left[t_1, t_2, \ldots, t_\mathcal{N}\right] \in \mathcal{R}^{\mathcal{N} \times d}, 
\end{equation}
where $t_i= F_{T}(c_{i})$ for $i \in \{1,2, \ldots, \mathcal{N}\}$. Then, we perform singular value decomposition (svd)~\cite{klema1980singular} on these text features. It can be calculated as follows: 
\begin{equation}
\mathcal{U}, \mathcal{S}, \mathcal{V}=\operatorname{svd}(\mathcal{T}).
\end{equation}
We can obtain $\mathcal{U}= \left[{e}_1, e_2, \ldots, e_\mathcal{N}\right] $ as an orthonormal basis for the span of $\mathcal{T}$. The orthonormal basis $\mathcal{U}$ can be utilized to compute a semantic projection matrix $\mathbf{P} \in \mathcal{R}^{d \times d}$, which can be defined as follows:
\begin{equation}
\mathbf{P}=\mathcal{U}  \odot \mathcal{U}^{\top},
\end{equation}
where $\odot$ represents the matrix multiplication. Then, we can project the image and text features from the original space into the semantic subspace with the projection matrix $\mathbf{P}$. They can be calculated as follows:
\begin{equation}
\begin{split}
\widetilde{F_{I}(\mathbf{x}_{I})}=  F_{I}(\mathbf{x}_{I}) \odot \mathbf{P}, \\
\widetilde{F_{T}(\mathbf{x}_{T})}=  F_{T}(\mathbf{x}_{T}) \odot \mathbf{P},
\end{split}
\end{equation}
where $\widetilde{F_{I}(\mathbf{x}_{I})}$ and $\widetilde{F_{T}(\mathbf{x}_{T})}$ respectively represent the image feature embeddings and text feature embeddings after semantic projection. The loss function $J$, which is used to generate multimodal adversarial examples, can be rewritten as:
\begin{equation}
J = \frac{(F_I\left({\mathbf{x}}_{I}\right) \odot \mathbf{P}) \cdot (F_T\left({\mathbf{x}}_{T}\right)\odot \mathbf{P})}{d},
\end{equation}

\par The core of the semantic projection matrix is how to build an effective semantic corpus. In this paper, we construct the semantic corpus by creating a subset of the text-testing dataset, which is generated through random sampling of the complete testing dataset. To better understand the role of the semantic corpus, we analyze the impact of different sampling proportions $\mathcal{P}=\mathcal{N} / \mathcal{M}$, where $\mathcal{N}$ represents the number of samples used to generate the matrix and $\mathcal{M}$ represents the number in the entire text dataset. 
We adopt ALBEF as the source model to generate the multimodal adversarial examples. TCL, CLIP$_{\rm CNN}$, and CLIP$_{\rm ViT}$ are used as the target models to evaluate the adversarial transferability. We adopt varying proportions of text data from the Flickr30K dataset to generate the projection matrix, \textit{i.e.,} 20\%, 40\%, 60\%, 80\%, and 100\%. The experiment results are shown in Table~\ref{table:Proportion}. Analyses are summarized as follows. First, compared to the baseline DRA~\cite{gao2024boosting}, which calculates the loss value in the original contrast space, the proposed loss function incorporating varying proportions of text data demonstrates improved adversarial transferability. It demonstrates the effectiveness of the proposed semantic contrast space in improving adversarial transferability. Second, 
the projection matrices generated from text data with varying proportions yield different levels of improvement in adversarial transferability, with 40\% achieving the best improvement. Considering the computational expenses, we set the proportion to 40\% for the experiments. In the appendix, we provide a detailed algorithm for the proposed SA-AET.

\begin{table*}[t]
\centering
\caption{{Optimal Hyper-parameters Selection}. Attack Success Rate (\%) on different settings, {Top} for different values of $\kappa , \mu, \nu$ in Equation \ref{eq:text-deviate} and {Bottom} for different numbers of samples $m$.}
\label{tab:optimal_params}
\setlength\tabcolsep{0.4cm}
\begin{tabular}{@{}ccccccccc@{}}
\toprule
\multirow{2}{*}{Setting} & \multicolumn{2}{c}{ALBEF}       & \multicolumn{2}{c}{TCL}         & \multicolumn{2}{c}{CLIP$_\mathrm{ViT}$}   & \multicolumn{2}{c}{CLIP$_\mathrm{CNN}$}    \\ \cmidrule(l){2-9} 
& TR R@1         & IR R@1         & TR R@1         & IR R@1         & TR R@1        & IR R@1         & TR R@1         & IR R@1         \\ 
\midrule \midrule
 $\boldsymbol{[\kappa,\mu,\nu] = [0.2,0.0,0.8]}$ & \textbf{99.9} & 99.98 & 96.42 & 96.05 & 51.53 & 59.47 & 52.23 & 61.41 \\
 $\boldsymbol{[\kappa,\mu,\nu] = [0.2,0.2,0.6]}$ & \textbf{99.9} & 99.98 & 96.21 & \textbf{96.10} & 52.27 & 59.60 & 52.75 & 61.44 \\
$\boldsymbol{[\kappa,\mu,\nu] = [0.2,0.4,0.4]}$ & \textbf{99.9} & 99.98 & 96.00 & \textbf{96.10} & 52.52 & 59.60 & 52.87 & 61.13 \\
$\boldsymbol{[\kappa,\mu,\nu] = [0.2,0.6,0.2]}$ & \textbf{99.9} & 99.98 & 96.00 & 96.07 & 52.76 & 59.73 & 53.38 & 61.23 \\
$\boldsymbol{[\kappa,\mu,\nu] = [0.2,0.8,0.0]}$ & \textbf{99.9} & \textbf{100.0} & 95.89 & 96.05 & 53.13 & 59.95 & 53.64 & 61.17 \\
 $\boldsymbol{[\kappa,\mu,\nu] = [0.4,0.0,0.6]}$ & \textbf{99.9} & 99.98 & 96.84 & 96.02 & 53.50 & 62.34 & 55.43 & 63.88 \\
$\boldsymbol{[\kappa,\mu,\nu] = [0.4,0.2,0.4]}$ & \textbf{99.9} & \textbf{100.0} & 96.63 & 96.02 & 53.62 & 62.21 & 55.30 & 63.84 \\
 $\boldsymbol{[\kappa,\mu,\nu] = [0.4,0.4,0.2]}$ & \textbf{99.9} & \textbf{100.0} & \textbf{96.94} & 96.02 & 54.48 & 62.34 & 55.30 & 63.98 \\
$\boldsymbol{[\kappa,\mu,\nu] = [0.4,0.6,0.0]}$ & \textbf{99.9} & 99.98 & 96.84 & 96.05 & 54.60 & 62.21 & 55.30 & 63.95 \\
$\boldsymbol{[\kappa,\mu,\nu] = [0.6,0.0,0.4]}$ & \textbf{99.9} & 99.98 & 96.42 & 96.05 & 55.09 & 63.56 & 57.22 & 65.49 \\
\cellcolor{gray!40} $\boldsymbol{[\kappa,\mu,\nu] = [0.6,0.2,0.2]}$ & \cellcolor{gray!40} \textbf{99.9} & \cellcolor{gray!40} 99.98 & \cellcolor{gray!40} 96.42 & \cellcolor{gray!40} 96.02 & \cellcolor{gray!40} \textbf{55.71} & \cellcolor{gray!40} 63.69 & \cellcolor{gray!40} \textbf{57.22} & \cellcolor{gray!40} \textbf{65.63}\\
$\boldsymbol{[\kappa,\mu,\nu] = [0.6,0.4,0.0]}$ & \textbf{99.9} & 99.98 & 96.52 & 96.05 & 55.45 & 63.72 & 57.09 & 65.56 \\
 $\boldsymbol{[\kappa,\mu,\nu] = [0.8,0.0,0.2]}$ & \textbf{99.9} & \textbf{100.0} & 96.63 & 95.93 & 54.85 & 64.18 & 56.45 & 65.63 \\
$\boldsymbol{[\kappa,\mu,\nu] = [0.8,0.2,0.0]}$ & \textbf{99.9} & 99.98 & 96.73 & 95.83 & 54.97 & \textbf{64.47} & 56.32 & 65.49\\
\midrule \midrule
$\boldsymbol{m = 3}$ & \textbf{99.9} & \textbf{100.0} & 96.00 & 95.93 & 54.56 & 63.50 & 56.49 & 65.00\\
$\boldsymbol{m = 4}$ & \textbf{99.9} & \textbf{100.0} & 96.10 & 96.00 & 54.83 & 63.65 & 56.64 & 64.83\\ 
\cellcolor{gray!40} $\boldsymbol{m = 5}$ & \cellcolor{gray!40} \textbf{99.9} & \cellcolor{gray!40} 99.98 & \cellcolor{gray!40} \textbf{96.42} & \cellcolor{gray!40} 96.02 & \cellcolor{gray!40} \textbf{55.58} & \cellcolor{gray!40} 63.89 & \cellcolor{gray!40} 57.22 & \cellcolor{gray!40} \textbf{65.59}\\
$\boldsymbol{m = 6}$ & \textbf{99.9} & \textbf{100.0} & 95.89 & \textbf{96.21} & 54.23 & 63.56 & \textbf{57.24} & 65.19\\
$\boldsymbol{m = 7}$ & \textbf{99.9} & \textbf{100.0} & 96.33 & 96.17 & 55.58 & \textbf{63.95} & 56.98 & 65.15\\
\bottomrule
\end{tabular}
\end{table*}

\section{Experiments}
To evaluate the effectiveness of the proposed method, we conduct extensive experiments across different datasets and VLP models. First, we introduce the detailed experimental setup, including the VLP models, image-text pair benchmark datasets, the settings of adversarial attack, and the evaluation metrics in \textbf{Section}~\ref{Experimental_Settings}. Subsequently, we introduce the selection of the optimal hyper-parameters in \textbf{Section}~\ref{Hyper_parameters}. Then, we present the comparative experimental results with state-of-the-art methods on the cross-model adversarial transferability within the image-text retrieval task in \textbf{Section}~\ref{sec:comparison}. 
After that, we present the cross-task transferability of multimodal adversarial examples from the image-text retrieval task to other tasks, along with adversarial transferability testing of current state-of-the-art LLMs in \textbf{Section}~\ref{Cross_task} and \textbf{Section}~\ref{sec:Transferability_LLMs}, respectively. Additionally, we present the ablation study of the proposed method in \textbf{Section}~\ref{sec:ablation}. Finally, we introduce a knowledge transfer-based metric to offer an alternative perspective on the observed improvements in transferability in \textbf{Section}~\ref{sec:perfromance_analysis}.

\subsection{Experimental Settings}
\label{Experimental_Settings}
\noindent \textbf{Benchmark datasets.}  We adopt two widely used image-text pair datasets as the benchmark dataset to conduct experiments on the image-text retrieval task, \textit{i.e.,}  Flickr30K \cite{plummer2015flickr30k} and MSCOCO \cite{lin2014microsoft}. The Flickr30K dataset contains 31,783 images, each annotated with five captions. Similarly, the MSCOCO dataset includes 123,287 images, each accompanied by approximately five captions. For the visual grounding task, we adopt the RefCOCO+ \cite{yu2016modeling} dataset to conduct experiments. RefCOCO+ is a dataset with 141,564 referring expressions linked to 50,000 objects across 19,992 images from the MSCOCO collection. It is widely used to assess grounding models, focusing on how well they can locate objects described by natural language. For the Image Captioning task, which is another key challenge in vision-and-language research, we utilize the MSCOCO dataset. This dataset is widely used because it provides rich images paired with diverse captions.

\begin{table*}[t]
\caption{{Comparison with state-of-the-art methods on the Flickr30K dataset for the image-text retrieval task.} The source column shows VLP models we use to generate multimodal adversarial examples. For both Image Retrieval and Text Retrieval, we provide an R@1 attack success rate (\%).}
\begin{center}
\small
\renewcommand\arraystretch{1}
\setlength{\tabcolsep}{8pt}
    \resizebox{0.9\linewidth}{!}{
		\begin{tabular}{ @{\extracolsep{\fill}} c|c|cc|cc|cc|cc} 
        \toprule[0.3mm]
			& &  \multicolumn{2}{c}{\textbf{ALBEF}} & \multicolumn{2}{c}{\textbf{TCL}} & \multicolumn{2}{c}{\textbf{CLIP$_{\rm ViT}$}} & \multicolumn{2}{c}{\textbf{CLIP$_{\rm CNN}$}}  \\
			\cmidrule{3-10} 
			\multirow{-2}{*}{\textbf{Source}} &\multirow{-2}{*}{\textbf{Attack}} & {TR R@1} & {IR R@1} & {TR R@1} & {IR R@1} & {TR R@1} & {IR R@1} & {TR R@1} & {IR R@1}  \\
			\midrule
			\multirow{7}{*}{\rotatebox[origin=c]{0}{\textbf{ALBEF}}} 
            & PGD \cite{madry2017towards} &  93.74 &  94.43 & 24.03 & 27.90 & 10.67 & 15.82 & 14.05 & 19.11 \\
            & BERT-Attack \cite{li2020bertattack} &  11.57 &  27.46 & 12.64 & 28.07 & 29.33 & 43.17 & 32.69 & 46.11 \\
            & Sep-Attack \cite{zhang2022towards} &  95.72 &  96.14 & 39.30 & 51.79 & 34.11 & 45.72 & 35.76 & 47.92\\
            & Co-Attack \cite{zhang2022towards} &  97.08 &  98.36 & 39.52 & 51.24 & 29.82 & 38.92 & 31.29 & 41.99  \\
			& SGA \cite{lu2023setlevel} &  99.90 &  99.98 & 87.88 & 88.05 & 36.69 & 46.78 & 39.59 & 49.78\\
                & DRA \cite{gao2024boosting} &  99.90 &  99.93 & 91.57 & 91.17 & 46.26 & 56.80 & 49.55 & 59.01\\
                & \cellcolor{gray!40} \textbf{SA-AET (ours)} & \cellcolor{gray!40} \textbf{99.90} & \cellcolor{gray!40} \textbf{99.98} & \cellcolor{gray!40} \textbf{96.42} & \cellcolor{gray!40} \textbf{96.02} & \cellcolor{gray!40} \textbf{55.58} & \cellcolor{gray!40} \textbf{63.89} & \cellcolor{gray!40} \textbf{57.22} & \cellcolor{gray!40} \textbf{65.59}\\
			\midrule 
			\multirow{7}{*}{\rotatebox[origin=c]{0}{\textbf{TCL}}} 
            & PGD \cite{madry2017towards}  & 35.77 & 41.67 &  99.37 &  99.33 & 10.18 & 16.3 & 14.81 & 21.10 \\
            & BERT-Attack \cite{li2020bertattack} & 11.89 & 26.82 &  14.54 &  29.17 & 29.69 & 44.49 & 33.46 & 46.07 \\
            & Sep-Attack \cite{zhang2022towards} & 52.45 & 61.44 &  99.58 &  99.45 & 37.06 & 45.81 & 37.42 & 49.91\\
            & Co-Attack \cite{zhang2022towards} &  49.84 & 60.36 &  91.68 &  95.48 & 32.64 & 42.69 & 32.06 & 47.82\\
			& SGA \cite{lu2023setlevel} & 92.49 & 92.77 &  100.0 &  100.0 & 36.81 & 46.97 & 41.89 & 51.53\\
            & DRA \cite{gao2024boosting} & 95.20 & 95.58 &  100.0 &  99.98 & 47.24 & 57.28 & 52.23 & 62.23\\
            & \cellcolor{gray!40} \textbf{SA-AET (ours)}  & \cellcolor{gray!40} \textbf{98.85} & 
            \cellcolor{gray!40} \textbf{98.50} & 
            \cellcolor{gray!40} \textbf{100.0} & 
            \cellcolor{gray!40} \textbf{100.0} & 
            \cellcolor{gray!40} \textbf{56.20} & 
            \cellcolor{gray!40} \textbf{63.47} & 
            \cellcolor{gray!40} \textbf{59.77} & 
            \cellcolor{gray!40} \textbf{67.86}\\
			\midrule 
			\multirow{7}{*}{\rotatebox[origin=c]{0}{\textbf{CLIP$_{\rm ViT}$}}} 
            & PGD \cite{madry2017towards} & 3.13 & 6.48 & 4.43 & 8.83 &  69.33 &  84.79 & 13.03 & 17.43\\
            & BERT-Attack \cite{li2020bertattack}  & 9.59 & 22.64 & 11.80 & 25.07 & 28.34 & 39.08 & 30.40 & 37.43  \\
            & Sep-Attack \cite{zhang2022towards} & 7.61 & 20.58 & 10.12 & 20.74 &  76.93 &   87.44 & 29.89 & 38.32\\
            & Co-Attack \cite{zhang2022towards} & 8.55 & 20.18 & 10.01 & 21.29 &  78.53 &  87.50 & 29.50 & 38.49\\
			& SGA \cite{lu2023setlevel} & 22.42 & 34.59 & 25.08 & 36.45 &  100.0 &  100.0 & 53.26 & 61.10\\
                & DRA \cite{gao2024boosting} & 27.84 & 42.84 & 27.82 & 44.60 &  100.0 &  100.0  & 64.88 & 69.50\\
                & \cellcolor{gray!40} \textbf{SA-AET (ours)} &  \cellcolor{gray!40} \textbf{36.60} & 
                \cellcolor{gray!40} \textbf{50.44} & 
                \cellcolor{gray!40} \textbf{39.20} & 
                \cellcolor{gray!40} \textbf{51.10} & 
                \cellcolor{gray!40} \textbf{100.0} & 
                \cellcolor{gray!40} \textbf{100.0} & 
                \cellcolor{gray!40} \textbf{71.01} & 
                \cellcolor{gray!40} \textbf{74.10}\\
			\midrule 
			\multirow{7}{*}{\rotatebox[origin=c]{0}{\textbf{CLIP$_{\rm CNN}$}}} 
            & PGD \cite{madry2017towards} & 2.29 & 6.15 & 4.53 & 8.88 & 5.40 & 12.08 &  89.78 &  93.04\\
            & BERT-Attack \cite{li2020bertattack} & 8.86 & 23.27 & 12.33 & 25.48 & 27.12 & 37.44 & 30.40 & 40.10\\
            & Sep-Attack \cite{zhang2022towards} & 9.38 & 22.99 & 11.28 & 25.45 & 26.13 & 39.24 &  93.61 &  95.30\\
            & Co-Attack \cite{zhang2022towards} & 10.53 & 23.62 & 12.54 & 26.05 & 27.24 & 40.62 &  95.91 &  96.50\\
			& SGA \cite{lu2023setlevel} &  15.64 & 28.60 & 18.02 & 33.07 & 39.02 & 51.45 &  99.87 &  99.90\\
            & DRA \cite{gao2024boosting} & 19.50 & 34.59 & 21.60 & 37.88 & 48.47 & 59.12 &  99.87 &  99.90\\
            & \cellcolor{gray!40} \textbf{SA-AET (ours)} &  \cellcolor{gray!40} \textbf{23.98} & 
            \cellcolor{gray!40} \textbf{38.28} & 
            \cellcolor{gray!40} \textbf{27.29} &  
            \cellcolor{gray!40} \textbf{41.81} & 
            \cellcolor{gray!40} \textbf{54.11} & 
            \cellcolor{gray!40} \textbf{64.21} & 
            \cellcolor{gray!40} \textbf{100.0} & 
            \cellcolor{gray!40} \textbf{99.97}\\
			\bottomrule[0.3mm]

   \end{tabular}}
\end{center}
\label{tab:flickr}
\end{table*}

\begin{table*}[t]
\caption{{Comparison with state-of-the-art methods on the MSCOCO dataset for the image-text retrieval task.} The source column shows VLP models we use to generate multimodal adversarial examples. For both Image Retrieval and Text Retrieval, we provide an R@1 attack success rate (\%).}
\begin{center}
\small
\renewcommand\arraystretch{1}
\setlength{\tabcolsep}{8pt}
    \resizebox{0.9\linewidth}{!}{
		\begin{tabular}{ @{\extracolsep{\fill}} c|c|cc|cc|cc|cc} 
        \toprule[0.3mm]
			& &  \multicolumn{2}{c}{\textbf{ALBEF}} & \multicolumn{2}{c}{\textbf{TCL}} & \multicolumn{2}{c}{\textbf{CLIP$_{\rm ViT}$}} & \multicolumn{2}{c}{\textbf{CLIP$_{\rm CNN}$}}  \\
			\cmidrule{3-10} 
			\multirow{-2}{*}{\textbf{Source}} &\multirow{-2}{*}{\textbf{Attack}} & {TR R@1} & {IR R@1} & {TR R@1} & {IR R@1} & {TR R@1} & {IR R@1} & {TR R@1} & {IR R@1}  \\
			\midrule
			\multirow{7}{*}{\rotatebox[origin=c]{0}{\textbf{ALBEF}}} 
            & PGD \cite{madry2017towards} & 94.35 & 93.26 & 34.15 & 36.86 & 21.71 & 27.06 & 23.83 & 30.96 \\
            & BERT-Attack \cite{li2020bertattack} &  24.39 & 36.13 &  24.34 &  33.39 &  44.94 &   52.28 & 47.73 & 54.75 \\
            & Sep-Attack \cite{zhang2022towards} &  97.01 & 96.59 & 61.06 & 66.13 & 57.19 & 65.82 & 58.81 & 68.61\\
            & Co-Attack \cite{zhang2022towards} &  65.22 & 72.41 & 94.95 & 97.87 & 55.28 & 62.33 & 56.68 & 66.45\\
			& SGA \cite{lu2023setlevel} &  99.95 & 99.94 & 87.46 & 88.17 & 63.72 & 69.71 & 63.91 & 70.78 \\
                & DRA \cite{gao2024boosting} &  99.90&99.93&88.81&90.06&69.25&75.31&68.53&75.09\\
                & \cellcolor{gray!40} \textbf{SA-AET (ours)} & \cellcolor{gray!40} \textbf{100.0} & 
                \cellcolor{gray!40} \textbf{99.99} & 
                \cellcolor{gray!40} \textbf{97.28} & 
                \cellcolor{gray!40} \textbf{96.88} &  
                \cellcolor{gray!40} \textbf{76.57} & 
                \cellcolor{gray!40} \textbf{80.24} & 
                \cellcolor{gray!40} \textbf{76.17} & 
                \cellcolor{gray!40} \textbf{80.64}\\
			\midrule 
			\multirow{7}{*}{\rotatebox[origin=c]{0}{\textbf{TCL}}} 
            & PGD \cite{madry2017towards}  & 40.81 & 44.09 & 98.54 & 98.20 & 21.79 & 26.92 & 24.97 & 32.17\\
            & BERT-Attack \cite{li2020bertattack} &  35.32 & 45.92 & 38.54 &  48.48 &  51.09 & 58.80 & 52.23 &  61.26\\
            & Sep-Attack \cite{zhang2022towards} & 66.38 & 72.19 & 99.15 & 98.98 & 59.94 & 65.95 & 60.77 & 69.37 \\
            & Co-Attack \cite{zhang2022towards} &  49.84 & 60.36 &  91.68 &  95.48 & 32.64 & 42.69 & 32.06 & 47.82\\
			& SGA \cite{lu2023setlevel} & 92.70&92.99&100.0&100.0&59.79&65.31&60.52&67.34\\
            & DRA \cite{gao2024boosting} & 94.72&95.89&100.0&\textbf{100.0}&70.51&74.95&70.29&76.99\\
            & \cellcolor{gray!40} \textbf{SA-AET (ours)} & \cellcolor{gray!40} \textbf{97.78} & 
            \cellcolor{gray!40} \textbf{98.08} & 
            \cellcolor{gray!40} \textbf{100.0} & 
            \cellcolor{gray!40} 99.99 & 
            \cellcolor{gray!40} \textbf{76.12} & 
            \cellcolor{gray!40} \textbf{79.74} & 
            \cellcolor{gray!40} \textbf{75.89} & 
            \cellcolor{gray!40} \textbf{80.92}\\
			\midrule 
			\multirow{7}{*}{\rotatebox[origin=c]{0}{\textbf{CLIP$_{\rm ViT}$}}} 
            & PGD \cite{madry2017towards} & 10.26 & 13.69 & 12.72 & 15.81 & 82.91 & 90.51 & 21.62 & 28.78 \\
            & BERT-Attack \cite{li2020bertattack}  & 20.34 & 29.74 & 21.08 & 29.61 & 45.06 &  51.68 & 44.54 & 53.72\\
            & Sep-Attack \cite{zhang2022towards} & 25.91 & 36.84 & 28.20 & 38.47 & 88.36 & 97.09 & 47.57 & 57.79\\
            & Co-Attack \cite{zhang2022towards} & 26.35 & 36.69 & 88.78 & 96.72 & 28.23 & 38.42 & 47.36 & 58.45 \\
			& SGA \cite{lu2023setlevel} & 43.75&51.08&44.05&51.02&100.0&100.0&70.66&75.58\\
                & DRA \cite{gao2024boosting} & 52.69&61.50&51.88&61.06&100.0&100.0&80.18&84.11\\
                & \cellcolor{gray!40} \textbf{SA-AET (ours)} & \cellcolor{gray!40} \textbf{57.64} & 
                \cellcolor{gray!40} \textbf{66.88} & 
                \cellcolor{gray!40} \textbf{57.30} & 
                \cellcolor{gray!40} \textbf{65.16} & 
                \cellcolor{gray!40} \textbf{100.0} & 
                \cellcolor{gray!40} \textbf{100.0} & 
                \cellcolor{gray!40} \textbf{83.98} & 
                \cellcolor{gray!40} \textbf{86.72}\\
			\midrule 
			\multirow{7}{*}{\rotatebox[origin=c]{0}{\textbf{CLIP$_{\rm CNN}$}}} 
            & PGD \cite{madry2017towards} & 8.38 & 12.73 & 11.90 & 15.68 & 13.66 & 20.62 & 92.68 & 94.71\\
            & BERT-Attack \cite{li2020bertattack} & 23.38 & 34.64 & 24.58 & 29.61 & 51.28 &  57.49 & 54.43 &  62.17\\
            & Sep-Attack \cite{zhang2022towards} & 29.13 & 40.64 & 31.40 & 42.99 & 52.23 & 59.73 & 96.16 & 97.54\\
            & Co-Attack \cite{zhang2022towards} & 29.49 & 41.50 & 31.83 & 43.44 & 53.15 & 60.15 & 97.79 & 98.54\\
			& SGA \cite{lu2023setlevel} &  36.94&46.79&38.81&48.90&62.19&67.73&99.92&\textbf{99.97}\\
            & DRA \cite{gao2024boosting} & 41.40&52.25&43.62&54.15&70.43&74.14&99.80&99.92\\
            & \cellcolor{gray!40} \textbf{SA-AET (ours)} & \cellcolor{gray!40} \textbf{43.62} & 
            \cellcolor{gray!40} \textbf{55.19} & 
            \cellcolor{gray!40} \textbf{47.01} & 
            \cellcolor{gray!40} \textbf{57.39} & 
            \cellcolor{gray!40} \textbf{73.67} & 
            \cellcolor{gray!40} \textbf{76.90} & 
            \cellcolor{gray!40} \textbf{100.0} & 
            \cellcolor{gray!40} 99.92 \\
			\bottomrule[0.3mm]
	\end{tabular}}
\end{center}
\label{tab:coco}
\end{table*}

\begin{table*}[t]
\caption{ {Cross-Task Transferability.} 
We utilize ALBEF to generate multi-modal adversarial examples for attacking both Visual Grounding (VG) on the RefCOCO+ dataset and Image Captioning (IC) on the MSCOCO dataset. The baseline represents the performance of each task without any attack, where a lower value indicates better effectiveness of the adversarial attack for both tasks.}
\begin{center}
\small
\renewcommand\arraystretch{1}
\setlength{\tabcolsep}{16pt}
    \resizebox{0.9\linewidth}{!}{
		\begin{tabular}{ @{\extracolsep{\fill}} c|ccc|ccccc} 
        \toprule[0.3mm]
			& \multicolumn{3}{c}{\textbf{ITR $\rightarrow$ VG}} & \multicolumn{5}{c}{\textbf{ITR $\rightarrow$ IC}} \\
			\cmidrule{2-9}
			\multirow{-2}{*}{\textbf{Attack}} & {Val $\downarrow$} & {TestA $\downarrow$} &    {TestB $\downarrow$} & {B@4 $\downarrow$} & {METEOR $\downarrow$} & {ROUGE-L $\downarrow$} & {CIDEr $\downarrow$} & {SPICE $\downarrow$} \\
			\midrule
            Clean & 58.46 & 65.89 & 46.25 & 39.7 & 31.0 & 60.0 & 133.3 & 23.8\\
            SGA~\cite{lu2023setlevel} & 50.56 & 57.42 & 40.66 & 28.0 & 24.6 & 51.2 & 91.4 & 17.7 \\
            DRA~\cite{gao2024boosting} &  49.70  &  56.32 & 40.54 &  27.2 &  24.2 &  50.7 &  88.3 &  17.2\\
            \cellcolor{gray!40} \textbf{SA-AET (ours)} & 
            \cellcolor{gray!40} \textbf{47.44} & 
            \cellcolor{gray!40} \textbf{53.27} & 
            \cellcolor{gray!40} \textbf{38.58} & 
            \cellcolor{gray!40} \textbf{21.0} & 
            \cellcolor{gray!40} \textbf{20.5} & 
            \cellcolor{gray!40} \textbf{45.2} & 
            \cellcolor{gray!40} \textbf{65.7} & 
            \cellcolor{gray!40} \textbf{13.6} \\
			\bottomrule[0.3mm]
	\end{tabular}}
\end{center}
\label{tab:cross_task}
\end{table*}

\label{Cross_model}

\par \noindent \textbf{VLP models.} We employ two standard architectures of VLP models, \textit{i.e.,} fused and aligned VLP models, to carry out our experiments. In the case of aligned VLP models, CLIP \cite{radford2021learning} is used for the aligned VLP model. Specifically, we adopt different image encoders for CLIP, including ResNet-101 \cite{He_2016_CVPR} and ViT-B/16 \cite{dosovitskiy2010image}, \textit{i.e.,} CLIP$_{\rm CNN}$ and CLIP$_{\rm ViT}$, respectively. In the case of fused VLP models, 
we adopt ALBEF \cite{li2021align} and TCL \cite{yang2022vision} to conduct our experiments. ALBEF incorporates a 12-layer ViT-B/16 as its image encoder, alongside two 6-layer transformers dedicated to text and multimodal encoding. Although TCL adopts the same architectural framework, it distinguishes itself by employing different pre-training objectives.
\par \noindent \textbf{Adversarial attack settings.} Following previous works~\cite{lu2023setlevel,gao2024boosting,li2024improving}, for the text adversarial attack, we set the text perturbation bound $\epsilon_{T}=1$, with the word list $W$ containing 10 words. For the image adversarial attack, we set the image perturbation bound $\epsilon_{I}= 8/255$ under $L_{\infty}$ norm. Moreover, the number of iteration steps $T$ is set to 10, with each step size $\alpha = 2/255$. We follow the image augmentation techniques in SGA during its optimization process, enlarging the image dataset by resizing the original images to five different scales: $\left\{0.50, 0.75, 1.00, 1.25, 1.50 \right\}$, utilizing bicubic interpolation.
We compare the proposed SA-AET with the following baselines: (1) PGD~\cite{madry2017towards}, (2) Bert-Attack~\cite{li2020bertattack}, (3) Sep-Attack~\cite{zhang2022towards}, (4) Co-Attack~\cite{zhang2022towards}, (5) SGA~\cite{lu2023setlevel}, and (6) DRA~\cite{gao2024boosting}.

\par \noindent \textbf{Evaluation metrics.} We adopt the Attack Success Rate (ASR) at the top-1 rank (R@1) as the metric to evaluate the adversarial transferability. It represents the percentage of successful attacks among all generated adversarial examples. A higher ASR indicates a more effective attack with higher transferability. 
%
\subsection{Optimal Hyper-Parameters Selection}
\label{Hyper_parameters}
The number of samples $m$ taken from the sub-triangle-$\mathcal{A}$ of the adversarial trajectory and scaling factors in Equation \ref{eq:text-deviate} are the hyper-parameters. We conduct a series of experiments to determine the optimal value of hyper-parameters and analyze their impact on the overall performance of the proposed method. Specifically, we employ ALBEF to generate multimodal adversarial examples on the Flickr30K dataset and evaluate their transferability to three other VLP models.

\par \noindent \textbf{Number of samples $m$ taken from the sub-triangle-$\mathcal{A}$ of the adversarial trajectory:}
The results are shown at the bottom of Table \ref{tab:optimal_params}. It can be observed that 
 transfer effects become apparent for both the Image Retrieval and Text Retrieval tasks when the sample size reaches 5. Beyond this point, as the sample size increases, transferability shows fluctuations without any significant enhancement. Considering the balance between transfer effects and computational efficiency, a sample size of 5 is the most effective hyperparameter. 
\par \noindent \textbf{Scaling factors $\kappa,\mu,\nu$ in adversarial text generation:}
In Equation \ref{eq:text-deviate}, we introduce three hyper-parameters that represent the weights of the clean image, the previous adversarial example, and the current adversarial example, respectively. We stipulate that the proportion of adversarial images cannot be zero, \ie, $\mu + \nu \neq 0$. Under the above conditions, we assign $\kappa,\mu,\nu$ to 14 different sets of values. Using ALBEF as the surrogate model, we generate adversarial examples and evaluate their transferability to the remaining three models. The detailed results are shown at the top of Table \ref{tab:optimal_params}. Adversarial transferability from ALBEF to TCL shows minimal variation across settings (around 1\%), while transferability to CLIP models varies significantly, with a performance gap between 4.18\% and 5.0\%. Thus, our hyper-parameter selection focuses on maximizing transferability to CLIP models. Increasing $\kappa$, which introduces a higher proportion of clean images, initially boosts transferability, but an excessive $\kappa$ reduces the adversarial ratio, decreasing overall effectiveness. Based on Table~\ref{tab:optimal_params}, we select $\boldsymbol{[\kappa,\mu,\nu]} = [0.6,0.2,0.2]$ for optimal transferability. 

\subsection{Cross-model Adversarial Transferability}
\label{sec:comparison}
We adopt four widely used VLP models, \textit{i.e.,} ALBEF, TCL, CLIP$_{\rm ViT}$, and CLIP$_{\rm CNN}$, to conduct comparative experiments on the image-text retrieval task. Specifically, we adopt one of four VLP models to generate multimodal adversarial examples and then use the other three VLP models to evaluate the transferability of adversarial examples. We compare the proposed SA-AET with a series of advanced adversarial attack methods for VLP models on the Flickr30K and MSCOCO datasets. 
\par \noindent \textbf{Performance on the Flickr30K dataset.} 
Comparative experimental results on the Flickr30K dataset are shown in Table~\ref{tab:flickr}. It can be observed that our SA-AET, along with SGA and DRA, consistently outperforms other adversarial attack methods under all white box attack scenarios. Regardless of whether using TR or IR, ASR consistently surpasses 99.8\%. Compared with previous works, the proposed SA-AET achieves the best performance of adversarial transferability across different VLP models under all adversarial attack scenarios. For example, 
when ALBEF is used as the source model to generate multimodal adversarial examples, the advanced SGA method achieves an ASR of 87.88\% on TCL, 36.69\% on CLIP$_{\rm ViT}$, and 39.59\% on CLIP$_{\rm CNN}$ for text retrieval tasks. For image retrieval tasks, it attains an ASR of 88.05\% on TCL, 46.78\% on CLIP$_{\rm ViT}$, and 49.78\% on CLIP$_{\rm CNN}$. In contrast, the proposed SA-AET method outperforms SGA with an ASR of 96.42\% on TCL, 55.58\% on CLIP$_{\rm ViT}$, and 57.22\% on CLIP$_{\rm CNN}$ for text retrieval tasks. For image retrieval tasks, SA-AET achieves an ASR of 96.02\% on TCL, 63.89\% on CLIP$_{\rm ViT}$, and 65.59\% on CLIP$_{\rm CNN}$.
Moreover, when applying ALBEF to target CLIP$_{\rm CNN}$, the proposed SA-AET improves the ASR by 7.67\% for TR R@1 and 6.58\% for IR R@1, compared to DRA. The results demonstrate the effectiveness of our proposed method in improving the transferability of multimodal adversarial examples on the Flickr30K dataset.
\par \noindent \textbf{Performance on the MSCOCO dataset.} Comparative experimental results on the MSCOCO are shown in Table~\ref{tab:coco}.
It can also be observed that compared with previous works, the proposed method achieves the best performance in improving the adversarial transferability for VLP models. Specifically, 
when using CLIP$_{\rm ViT}$ to generate adversarial examples, the advanced SGA method demonstrates an ASR of 43.75\% on ALBEF, 44.05\% on TCL, and 70.66\% on CLIP$_{\rm CNN}$ for text retrieval tasks. For image retrieval tasks, it achieves an ASR of 51.08\% on ALBEF, 51.02\% on TCL, and 75.58\% on CLIP$_{\rm CNN}$. Conversely, the proposed SA-AET method surpasses SGA, achieving an ASR of 57.64\% on ALBEF, 57.3\% on TCL, and 83.98\% on CLIP$_{\rm CNN}$ for text retrieval tasks. In image retrieval tasks, SA-AET records ASRs of 66.88\% on ALBEF, 65.16\% on TCL, and 86.72\% on CLIP$_{\rm CNN}$. Additionally, when applying CLIP$_{\rm ViT}$ to target TCL, the proposed SA-AET improves the ASR by 5.42\% for TR R@1 and 4.1\% for IR R@1, compared to DRA. The results highlight the effectiveness of our proposed method in enhancing the transferability of multimodal adversarial examples on the MSCOCO dataset.

\begin{figure*}[t]
    \centering
    \includegraphics[width=1.0\linewidth]{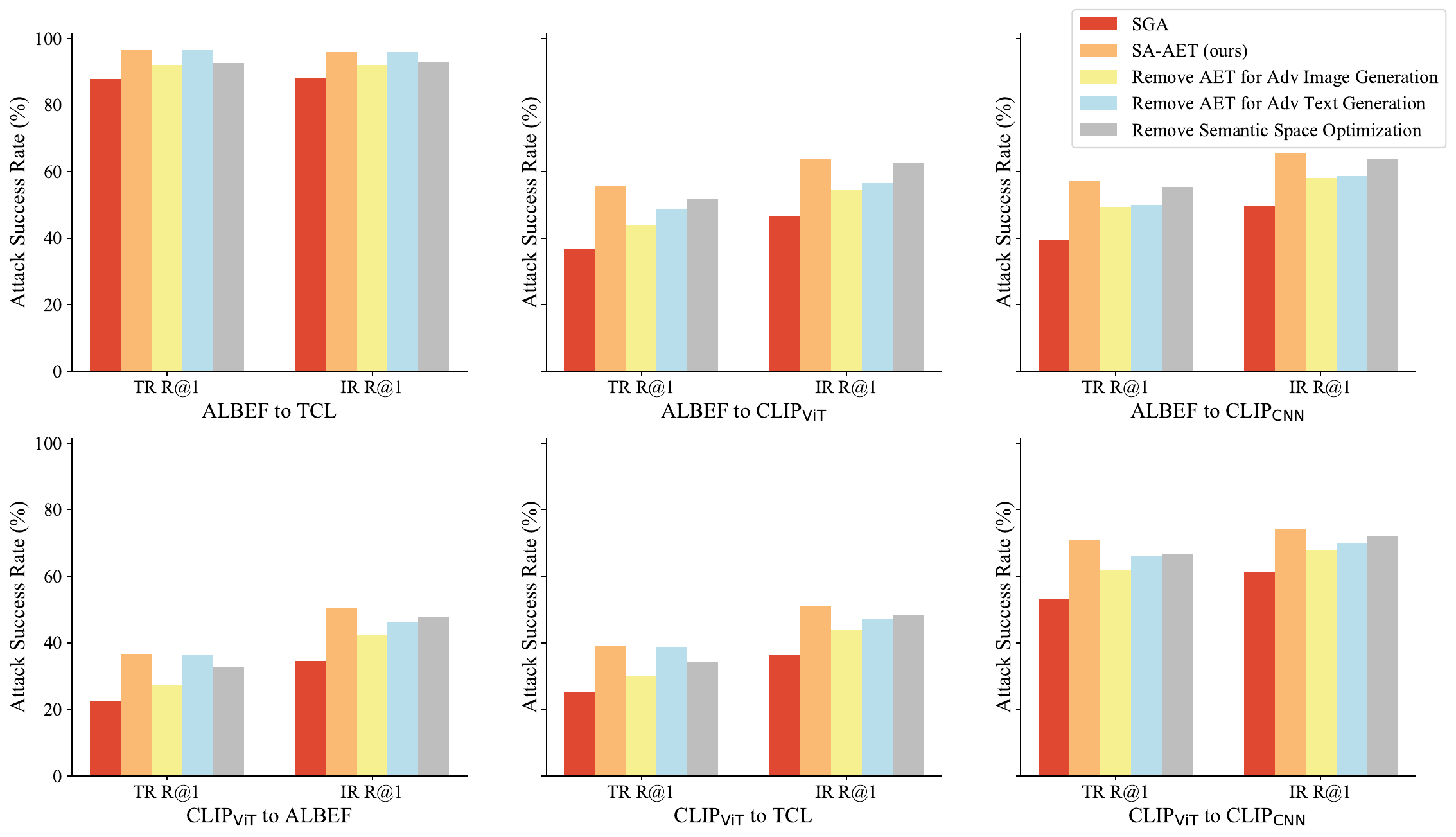}
    \caption{{Ablation Study.} {`Remove AET for Adv Image Generation'} refers to eliminating the proposed adversarial evolution triangle for adversarial image generation. {`Remove AET for Adv Image Generation'} refers to eliminating the proposed adversarial evolution triangle for adversarial text generation. {`Remove Semantic Space Optimization'} refers to eliminating the proposed semantic contrast space. }
    \label{fig:ablation}
\end{figure*}

\subsection{Cross-task Adversarial Transferability}
\label{Cross_task}
\begin{figure}[t]
    \centering
    \includegraphics[width=\linewidth]{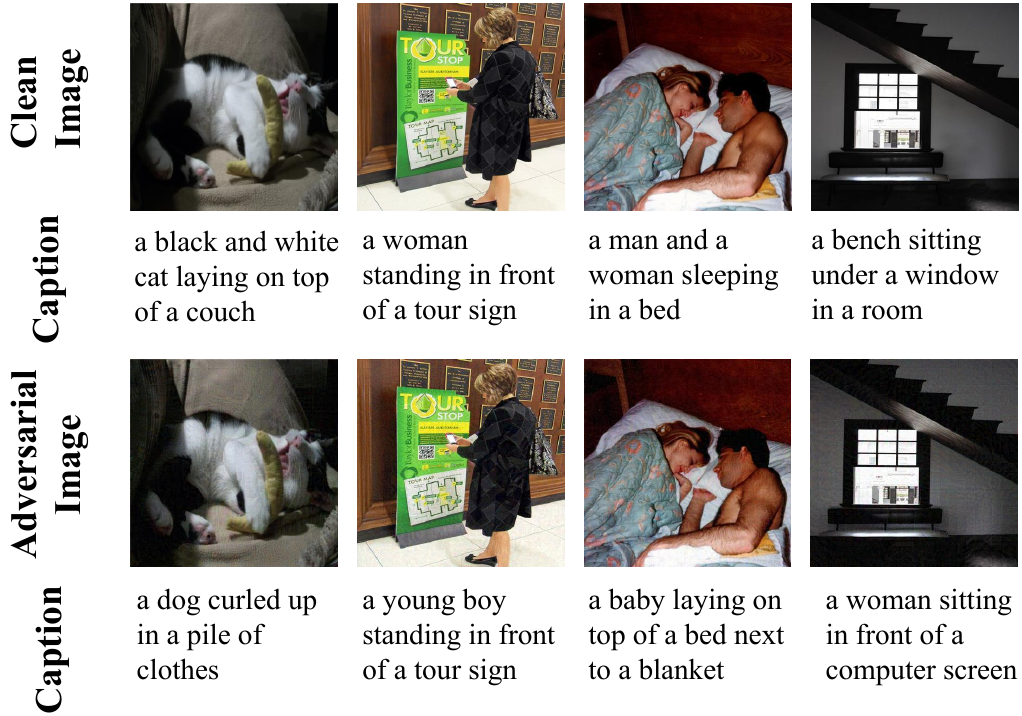}
    \caption{ {Visualization on Image Captioning Task.} We use the ALBEF model, pre-trained on Image Text Retrieval (ITR) task, to generate adversarial images on the MSCOCO dataset and use the BLIP \cite{li2022blip} model for Image Captioning on both clean images and adversarial images, respectively.}
    \label{fig:IC_Visualization}
\end{figure}
\begin{figure}[t]
    \centering
    \includegraphics[width=\linewidth]{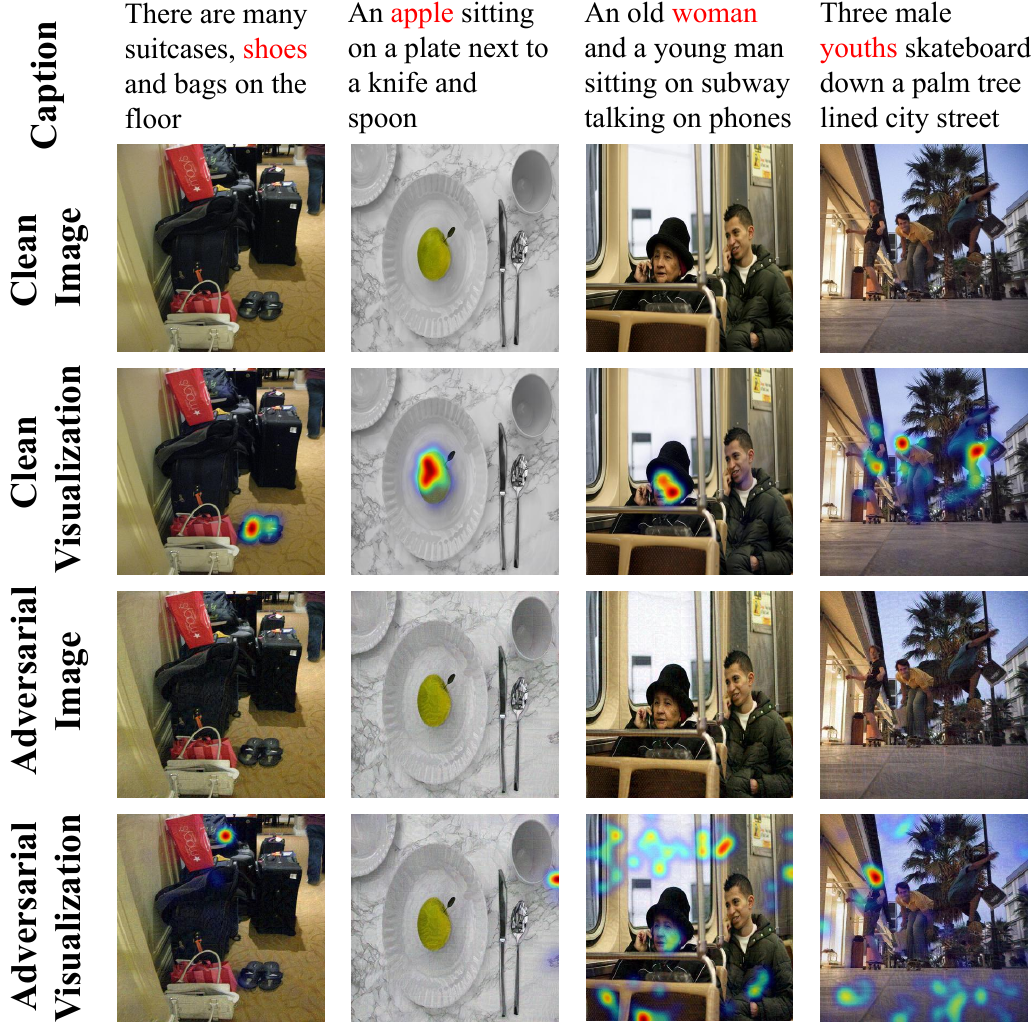}
    \caption{{Visualization on Visual Grounding Task.} We use the ALBEF model, pre-trained on the ITR task, to generate adversarial images on the RefCOCO+ dataset and use the same model, pre-trained on Visual Grouding (VG) task, to localize the regions corresponding to red words on both clean images and adversarial images, respectively.}
    \label{fig:VG_Visualization}
\end{figure}
We not only evaluate the transferability of multimodal adversarial examples generated by our proposed method across different VLP models but also conduct experiments to evaluate its effectiveness in transferring across diverse V+L tasks. Specifically, we generate adversarial examples for the Image-Text Retrieval (ITR) task and evaluate them on Visual Grounding (VG) and Image Captioning (IC) tasks. As evident from Table \ref{tab:cross_task} and visual results in Figures \ref{fig:IC_Visualization} and \ref{fig:VG_Visualization}, the adversarial examples generated for ITR demonstrate transferability, successfully impacting both VG and IC tasks.  Compared to SGA and DRA, adversarial examples generated by SA-AET lead to a substantial decline in metrics for both VG and IC tasks, with a notable drop of 25.7 and 22.6 points in the CIDEr score on the IC task, respectively. This demonstrates the superior effectiveness of SA-AET in transfer attacks, emphasizing its cross-task transferability. Moreover, our transferability consistently surpasses that of all other methods.

\subsection{Adversarial Transferability in MLLMs}
\label{sec:Transferability_LLMs}
\begin{table}[t]
\caption{{Transferability performance on the advanced commercial multimodal large Language Models.}}

\begin{center}
\small
\renewcommand\arraystretch{1}
\setlength{\tabcolsep}{8pt}
    \resizebox{0.8\linewidth}{!}{
		\begin{tabular}{ @{\extracolsep{\fill}} c|c c c} 
        \toprule[0.3mm]
			\textbf{Attack} & \textbf{GPT-4 R@1} & \textbf{Claude-3 R@1} & \textbf{Qwen-VL R@1} \\
			\midrule
                No Attack & 0.0  & 0.0  & 0.0 \\
                SGA \cite{lu2023setlevel} & 6.0  & 18.0  & 6.0 \\
                DRA \cite{gao2024boosting} & 16.0  & 22.0   & 15.0  \\
                \cellcolor{gray!40} SA-AET (ours) & 
                \cellcolor{gray!40} \textbf{18.0}  & 
                \cellcolor{gray!40} \textbf{24.0}  & 
                \cellcolor{gray!40} \textbf{16.0} \\
			\bottomrule[0.3mm]
	\end{tabular}}
\end{center}
\label{tab:rebuttal_llms}
\end{table}

\begin{figure}[t]
    \centering
    \includegraphics[width=\linewidth]{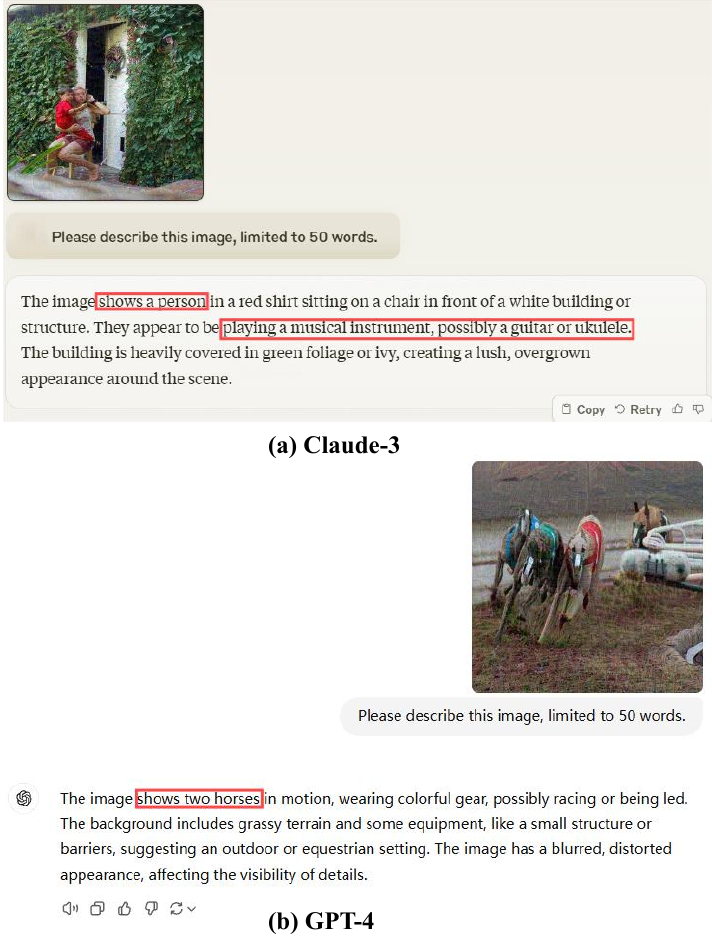}
    \caption{{Adversarial Transferability on Multimodal Large Language Models (MLLMs).} We input adversarial images along with the query ``Please describe this image, limited to 50 words." into MLLMs. The upper section represents the results from Claude-3, while the lower section shows the results from GPT-4. Incorrect descriptions are highlighted in red boxes.}
    \label{fig:LLMs_vis}
\end{figure}
Multimodal Large Language Models (MLLMs) have recently gained significant attention for their wide range of applications, demonstrating remarkable versatility and effectiveness in handling complex tasks. To further investigate the adversarial robustness of MLLMs, we conduct a series of extension experiments. Using ALBEF as a surrogate model, we generate adversarial examples under constrained perturbation settings, with a magnitude of 16/255 and a single-step perturbation of 0.5/255, iterating over 100 steps. These adversarial examples are then evaluated on advanced MLLMs (\textit{e.g.}, GPT-4~\cite{achiam2023gpt}, Claude-3~\cite{anthropic2024}) by prompting them with the query ``Describe this image". As shown in Figure~\ref{fig:LLMs_vis}, it illustrates that adversarial examples generated by our method are capable of deceiving state-of-the-art MLLMs, leading them to produce incorrect responses. In addition to these qualitative results on MLLMs, we further extend our experiments by randomly selecting 100 images from the Flick30K dataset. We generate adversarial images using SGA, DRA, and our method SA-AET, with ALBEF as the surrogate model, under the same perturbation setting (16/255). These adversarial images are then fed into MLLMs, where the query is prompted with ``Please describe this image, limited to 50 words". We rank the generated descriptions against 100 irrelevant texts based on CLIP similarity. If a description does not rank first, we consider it a successful attack. The results are shown in Table~\ref{tab:rebuttal_llms}. It can be observed that our SA-AET achieves the best transferability performance among advanced commercial MLLMs.

\subsection{Ablation Study}
\label{sec:ablation}
Our proposed method builds upon the SGA~\cite{lu2023setlevel} method and introduces three significant improvements: 1) the adversarial evolution triangle for adversarial image generation, 2) the adversarial evolution triangle for adversarial text generation, and 3) the contrast space optimization for semantic-aligned adversarial evolution triangle.
To evaluate the contribution of each element, we conduct ablation studies on the image-text retrieval (ITR) task and evaluate the transferability of multimodal adversarial examples from two different architectures: the fused and aligned architectures, corresponding to ALBEF and CLIP$_{\rm ViT}$, respectively. We then test how well these adversarial examples generalize to other VLP models. The results of the ablation studies can be seen in Figure~\ref{fig:ablation}. In the ablation studies, we compare five methods. The first is SGA \cite{lu2023setlevel}, which serves as our baseline, followed by SA-AET, representing our complete method. The other three methods are derived from SA-AET by sequentially removing each of the three key contributions we introduce. \textbf{`Remove AET for Adv Image Generation'} refers to eliminating the proposed adversarial evolution triangle for adversarial image generation. \textbf{`Remove AET for Adv Text Generation'} refers to eliminating the proposed adversarial evolution triangle for adversarial text generation. \textbf{`Remove Semantic Space Optimization'} refers to eliminating the proposed semantic contrast space. It is evident from the six sets of transfer attack experiments on ALBEF and CLIP$_{\rm ViT}$ that removing any of the improvements from SA-AET results in weaker transfer attacks compared to the full SA-AET method. It demonstrates the effectiveness of each element of our SA-AET. 

\subsection{Performance Analysis}
\label{sec:perfromance_analysis}
\begin{figure}[t]
    \centering
    \includegraphics[width=1.0\linewidth]{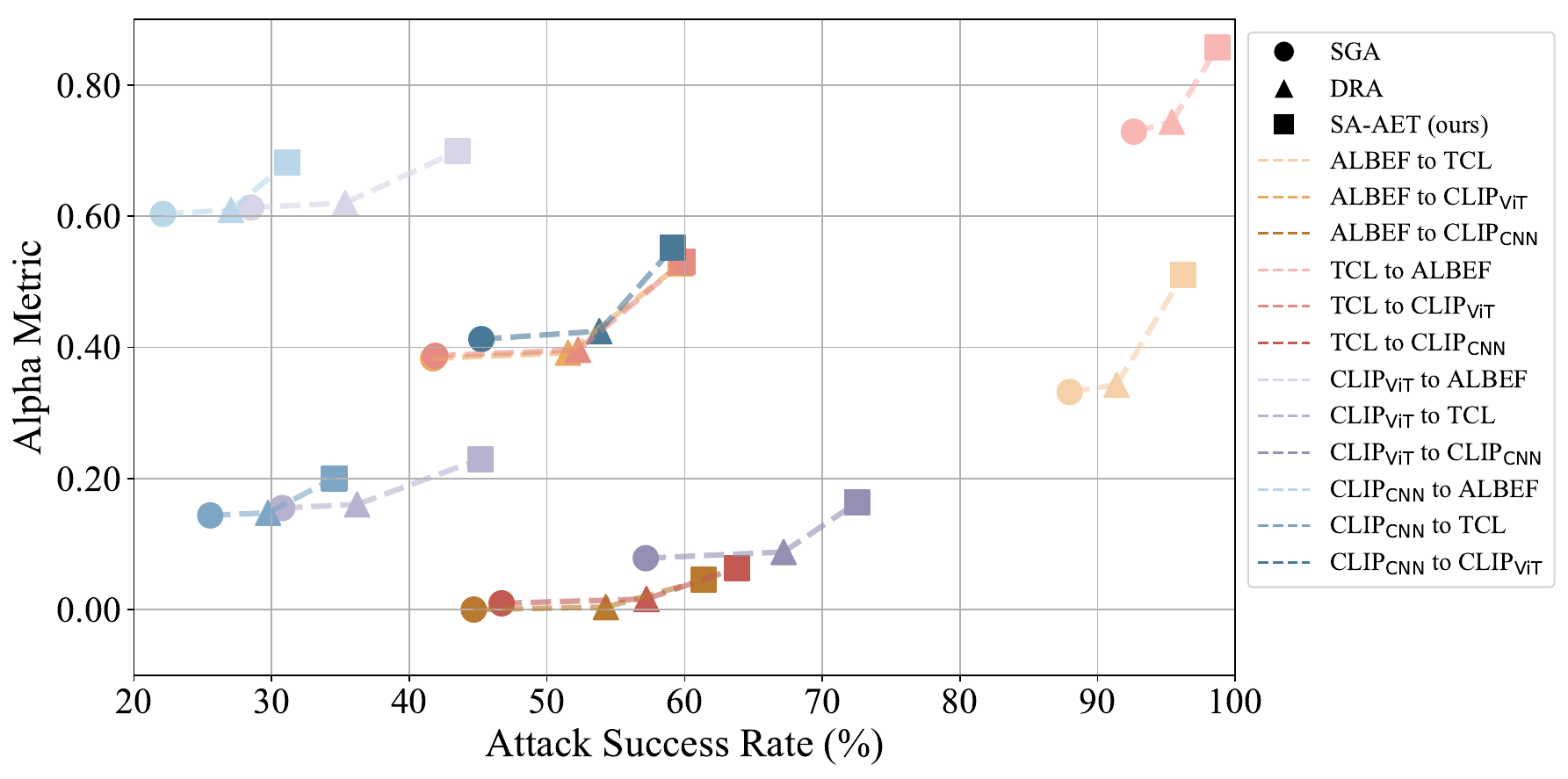}
    \caption{{Attack Success Rate vs. Alpha Metric in Transfer Attacks on VLP Models.} The horizontal axis represents the attack success rate of transfer attacks on the target model, while the vertical axis represents the alpha metric.}
    \label{fig:alpha_metric}
\end{figure}
In addition to examining the transfer attack success rates of multimodal adversarial examples generated by various VLP models, we introduce a knowledge transfer-based metric proposed by Liang \etal~\cite{liang2021uncovering} to offer an alternative perspective on the observed improvements in transferability and to provide insights into the underlying reasons for these enhancements, from the baseline method, SGA \cite{lu2023setlevel}, to the advanced approaches, DRA \cite{gao2024boosting} and SA-AET. The metric, known as the alpha metric, represents how often the adversarial attack transfers, which can be calculated as follows:

\begin{equation}
\label{eq:alpha}
\frac{J\left(F_{I}\left(\mathbf{x}_{I}+\epsilon^{sur}\right),F_T\left(\tilde{\mathbf{x}}^{sur}_{T}\right)\right)}{J\left(F_{I}\left(\mathbf{x}_{I}+\epsilon^{tar}\right),F_T\left(\tilde{\mathbf{x}}^{tar}_{T}\right)\right)}.
\end{equation}

In the above formula, $F_{I}$ and $F_{T}$ represent the image and text encoders of the target VLP model for transfer attacks, respectively, while $J\left(\cdot \right)$ denotes the loss function of the VLP model. $\epsilon^{sur}$ and $\tilde{\mathbf{x}}^{sur}_{T}$ represent the adversarial image perturbation and adversarial text generated on the surrogate model, respectively, while $\epsilon^{tar}$ and $\tilde{\mathbf{x}}^{tar}_{T}$ denote the adversarial image perturbation and adversarial text generated on the target model. In this context, the alpha metric can be viewed as the difference in loss between the multimodal adversarial examples generated on the surrogate model and its own multimodal adversarial examples. To measure the alpha metric, we select the Flickr30k dataset and utilize 12 sets of transfer attacks from four chosen VLP models, calculating the metric according to Formula~(\ref{eq:alpha}). We present the final results in Figure~\ref{fig:alpha_metric}. Notably, the $x$-axis represents the attack success rates of the three methods: SGA, DRA, and SA-AET. These success rates are obtained by averaging the TR R@1 and IR R@1 metrics. The alpha metric is calculated as the average across all Flickr30k data.

Based on the results presented in Figure \ref{fig:alpha_metric}, we find that SA-AET consistently outperforms both SGA and DRA, demonstrating significantly higher attack success rates and alpha metrics across all 12 sets of transfer attacks. Furthermore, the DRA method exhibits favorable performance, with both the attack success rates and alpha metrics for DRA and SA-AET showing substantial improvements over SGA. This finding suggests that our contribution to increasing adversarial transferability in both DRA and SA-AET is effective. While the difference in alpha metrics between DRA and SGA is relatively modest, there is a significant gap between SA-AET and the other two methods. The transfer attack success rates of DRA compared to SGA have already reached a high level, making further improvements challenging. However, by introducing semantic-aligned adversarial sub-triangles, we achieve a notable increase in the alpha metric. This indicates that we enhance how often the adversarial attack transfers, thereby further improving adversarial transferability.

\section{Conclusion}
In this paper, we focus on generating highly transferable adversarial examples (AEs) for vision-language pre-training (VLP) models. We find that previous methods focus only on augmenting image-text pairs around current adversarial examples, which offers limited improvements in transferability. To overcome this limitation, we propose leveraging the intersection regions along the adversarial trajectory during optimization, specifically by sampling from adversarial evolution triangles that are composed of clean, historical, and current adversarial examples. It significantly enhances the diversity of AEs, leading to improved transferability. Additionally, we tackle the issue of feature-matching distortion caused by redundant image features in VLP models. Adversarial examples generated in the original image-text feature contrast space are highly dependent on the victim model. We propose a semantic image-text feature contrast space to mitigate this, projecting the original features into a semantically aligned subspace. This projection reduces feature redundancy and enhances the transferability of adversarial examples. Our theoretical analysis validates the effectiveness of the proposed adversarial evolution triangles, and extensive experiments across various datasets and models demonstrate that our method outperforms state-of-the-art adversarial attack methods for VLP models. 

\bibliographystyle{plain}
\bibliography{main}

\vfill

\newpage

\onecolumn
\appendix


\section*{Detailed Algorithm}
\label{sec:Algorithm}
\begin{figure}[!h]
    \removelatexerror
    \begin{algorithm}[H]
        \caption{Attack formulation.}
        \begin{algorithmic}[1]
            \REQUIRE Image encoder $F_I$, Text encoder $F_T$, Image-caption pair $(\mathbf{x}_{I}, \mathbf{x}_{T})$, iteration steps $T$,  number of samples $m$, image maximal perturbation
            bound $\epsilon_{I}$, text maximal number
            of changeable words $\epsilon_{T}$,  single-step perturbation size $\alpha$, image augmentation scale set $\mathcal{I}=\left\{I_1,I_2,...,I_n\right\}$ 
            \ENSURE Adversarial Image $\tilde{\mathbf{x}}_{I}^T$, Adversarial Text $\tilde{\mathbf{x}}_{T}$
            \STATE /*Follow SGA and get $\tilde{\mathbf{x}}_{I}^{0}$ and $\tilde{\mathbf{x}}_{I}^{1}$ *\//
            \STATE $\tilde{\mathbf{x}}_{I}^{0} = \mathbf{x}_{I} + \epsilon_{I} \cdot N(0,1)$
            \STATE Construct the image augmentation set $\left\{\tilde{\mathbf{x}}_{I}^{0 1}, \tilde{\mathbf{x}}_{I}^{0 2}, \ldots, \tilde{\mathbf{x}}_{I}^{0 n}\right\}$ for $\tilde{\mathbf{x}}_{I}^{0}$ using the scales in $\mathcal{I}$.
            \STATE $\tilde{\mathbf{x}}_{I}^{1}=\underset{\mathbf{x}_{I},\epsilon_{I}}{\Pi}\left(\tilde{\mathbf{x}}_{I}^{0}+\alpha \cdot \operatorname{sign}\left(\frac{\nabla_{\mathbf{x}_{I}} \sum_{j=1}^n J(\tilde{\mathbf{x}}_{I}^{0 j},\mathbf{x}_{T})}{\left\|\nabla_{\mathbf{x}_{I}} \sum_{j=1}^n J(\tilde{\mathbf{x}}_{I}^{0 j},\mathbf{x}_{T})\right\|}\right)\right).$
            \STATE /* Adversarial Image Generation *\//
            \FOR{$i=1$ to $T-1$}
                \STATE Sample n instances from adversarial evolution sub-triangle-$\mathcal{A}$ and build set $\mathcal{S}^{i}=\left\{s_1^{i}, s_2^{i}, \ldots, s_m^{i}\right\}$.
                \STATE $s_k=\lambda \cdot \mathbf{x}_{I}+\beta \cdot \tilde{\mathbf{x}}_{I}^{i-1}+\gamma \cdot \tilde{\mathbf{x}}_{I}^{i}, \text { where } \lambda+\beta+\gamma=1.0 \text{ and } \lambda > \beta > \gamma.$
                \STATE Build perturbation set $\epsilon^{i} = \left\{\epsilon^{i}_{1}, \epsilon^{i}_{2}, \ldots, \epsilon^{i}_{m}\right\}$.
                \STATE $\epsilon^{i}_{k}=\alpha \cdot \operatorname{sign}\left(\frac{\nabla_{s^{i}_{k}} J\left(s^{i}_{k}, \mathbf{x}_{T}\right)}{\left\|\nabla_{s^{i}_{k}} J\left(s^{i}_{k}, \mathbf{x}_{T}\right)\right\|}\right).$
                \STATE Text-guided augmentation selection: obtain the optimal sample $s^{i}_{o}$.
                \STATE $o = \underset{\epsilon^{i}_{k}\in\epsilon^{i}}{\arg \max} J\left(\tilde{\mathbf{x}}_{I}^{i}+\epsilon^{i}_{k},\mathbf{x}_{T}\right).$
                \STATE Construct the image augmentation set $\mathcal{S}^{i}_{O} = \left\{s^{i1}_{o},s^{i2}_{o}, \ldots, s^{in}_{o} \right\}$ for $s^{i}_{o}$ using the scales in $\mathcal{I}$.
                \STATE $\tilde{\mathbf{x}}_{I}^{i+1}=\underset{\mathbf{x}_{I},\epsilon_{I}}{\Pi}\left(\tilde{\mathbf{x}}_{I}^{i}+\alpha \cdot \operatorname{sign}\left(\frac{\nabla_{s^{i}_{o}} \sum_{j=1}^n J\left(s^{ij}_{o}, \mathbf{x}_{T}\right)}{\left\|\nabla_{s^{i}_{o}} \sum_{j=1}^n J\left(s^{ij}_{o}, \mathbf{x}_{T}\right)\right\|}\right)\right).$
            \ENDFOR
            \STATE /* Adversarial Text Generation *\//
            \STATE $\tilde{\mathbf{x}}_{T}  =  \underset{\tilde{\mathbf{x}}_{T} \in B\left[{\mathbf{x}}_{T}, \epsilon_t\right]}{\arg \max } \left(\kappa \cdot J\left({\mathbf{x}}_{I}, \tilde{\mathbf{x}}_{T}\right) +\mu \cdot J\left(\tilde{\mathbf{x}}_{I}^{T-1},\tilde{\mathbf{x}}_{T}\right) +\nu \cdot J\left(\tilde{\mathbf{x}}_{I}^{T}), \tilde{\mathbf{x}}_{T}\right)\right).$
        \end{algorithmic}
    \end{algorithm}
    \vspace{-5mm}
\end{figure}

\section*{Proof of Theorem \ref{thm:goldbach}}
\label{sec:proof}
We first consider the following proposition to prove Theorem \ref{thm:goldbach}.
\begin{proposition}[Update Rules]
    The adversarial perturbation generated by the proposed method at $t$-step ($t\geq 2$) is
    \begin{equation}
        \begin{aligned}
             & \boldsymbol{g}_t &=&\ \  a_t\cdot \boldsymbol{g}+b_t\cdot\boldsymbol{Hg},\\
             & \boldsymbol{\delta}_{t} &=&\ \ \sum_{i=1}^{t} \boldsymbol{g}_i = c_t\cdot \boldsymbol{g} + d_t\cdot\boldsymbol{Hg},\\
        \end{aligned}
    \end{equation}
    where $\boldsymbol{g}$ is the gradient and $\boldsymbol{H}$ is the Hessian matrix of the loss function $L$ concerning $\boldsymbol{x}$, and 
    \begin{equation}
        \begin{aligned}
            & a_t &=&\ \ \ 1,\\
            & b_t &=&\ \ \ \beta\cdot(t-2)+\gamma\cdot(t-1),\\
            & c_t &=&\ \ \ t,\\
            & d_t &=&\ \ \ \sum_{i=2}^{t}\beta(i-2)+\gamma(i-1) = \frac{(t-1)(t-2)}{2}\beta+\frac{t(t-1)}{2}\gamma.
        \end{aligned}
    \end{equation}
    Then the update rule of \textbf{SAG} will be
    \begin{equation}
        \begin{aligned}
             & \boldsymbol{g}'_t &=&\ \  e_t\cdot \boldsymbol{g}+f_t\cdot\boldsymbol{Hg},\\
             & \boldsymbol{\zeta}_{t} &=&\ \ \sum_{i=1}^{t} \boldsymbol{g}'_i = h_t\cdot \boldsymbol{g} + l_t\cdot\boldsymbol{Hg},\\
        \end{aligned}
    \end{equation}
    where
    \begin{equation}
        \begin{aligned}
            & e_t &=&\ \ \ 1,\\
            & f_t &=&\ \ \ t-1,\\
            & h_t &=&\ \ \ t,\\
            & l_t &=&\ \ \ \frac{1}{2}\cdot t(t-1).
        \end{aligned}
    \end{equation}
\end{proposition}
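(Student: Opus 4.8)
The plan is to prove both expansions by induction on $t$, driven by a single first-order Taylor approximation of the gradient map together with the bookkeeping rule that any term containing two or more factors of the Hessian is discarded. I would first record the expansion of the gradient about the clean input $\boldsymbol{x}$,
\[
g(\boldsymbol{x}+\boldsymbol{v}) = \boldsymbol{g} + \boldsymbol{Hv} + O(\|\boldsymbol{v}\|^2),
\]
where $\boldsymbol{g}=g(\boldsymbol{x})$ and $\boldsymbol{H}$ is the Hessian of $L$ at $\boldsymbol{x}$, together with the convention that $\boldsymbol{H}\cdot(c\,\boldsymbol{g}+d\,\boldsymbol{Hg})$ is replaced by $c\,\boldsymbol{Hg}$ (the $\boldsymbol{H}^2\boldsymbol{g}$ piece being dropped). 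This is precisely the truncation needed downstream, since the Shapley interaction indices compared in Theorem~\ref{thm:goldbach} are quadratic in the perturbation and therefore only feel terms through first order in $\boldsymbol{H}$.

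For the proposed method the recursion $\boldsymbol{g}_t=g(\boldsymbol{x}+\beta\boldsymbol{\delta}_{t-2}+\gamma\boldsymbol{\delta}_{t-1})$ reaches back two iterates, so I would run a two-step induction. The base data are $\boldsymbol{\delta}_0=0$ and $\boldsymbol{g}_1=\boldsymbol{\delta}_1=\boldsymbol{g}$, giving $a_1=1$, $b_1=0$, $c_1=1$, $d_1=0$. Assuming $\boldsymbol{\delta}_{t-2}=c_{t-2}\boldsymbol{g}+d_{t-2}\boldsymbol{Hg}$ and $\boldsymbol{\delta}_{t-1}=c_{t-1}\boldsymbol{g}+d_{t-1}\boldsymbol{Hg}$ with $c_s=s$, I substitute into the Taylor expansion and apply the truncation to obtain $\boldsymbol{g}_t=\boldsymbol{g}+(\beta c_{t-2}+\gamma c_{t-1})\boldsymbol{Hg}=\boldsymbol{g}+(\beta(t-2)+\gamma(t-1))\boldsymbol{Hg}$, that is $a_t=1$ and $b_t=\beta(t-2)+\gamma(t-1)$. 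Summing then yields $c_t=\sum_{i=1}^{t}a_i=t$ and $d_t=\sum_{i=2}^{t}[\beta(i-2)+\gamma(i-1)]$; re-indexing the two arithmetic series to $\sum_{j=0}^{t-2}j$ and $\sum_{j=1}^{t-1}j$ gives $d_t=\tfrac{(t-1)(t-2)}{2}\beta+\tfrac{t(t-1)}{2}\gamma$, as claimed.

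The SGA case is a one-step induction and is the specialization $\beta=0$, $\gamma=1$. With $\boldsymbol{\zeta}_0=0$ and the hypothesis $\boldsymbol{\zeta}_{t-1}=h_{t-1}\boldsymbol{g}+l_{t-1}\boldsymbol{Hg}$, the same Taylor-plus-truncation step applied to $\boldsymbol{h}_t=g(\boldsymbol{x}+\boldsymbol{\zeta}_{t-1})$ produces $\boldsymbol{h}_t=\boldsymbol{g}+h_{t-1}\boldsymbol{Hg}=\boldsymbol{g}+(t-1)\boldsymbol{Hg}$, so $e_t=1$, $f_t=t-1$, and summation gives $h_t=t$ and $l_t=\sum_{j=0}^{t-1}j=\tfrac12 t(t-1)$. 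I expect the only genuine difficulty to lie not in the algebra but in making the truncation honest: the stated identities are exact only within the first-order model. I would therefore either state them explicitly as the leading behaviour in the regime where $\boldsymbol{H}$-quadratic terms are negligible (the setting of Wang \textit{et al.}~\cite{DBLP:conf/iclr/WangRLZ0Z21}), or carry the dropped $O(\boldsymbol{H}^2)$ remainder through the induction and verify that it does not perturb the $t^3$ and $t^2$ coefficients that Theorem~\ref{thm:goldbach} ultimately contrasts. The two-step coupling of $\boldsymbol{\delta}_{t-2}$ and $\boldsymbol{\delta}_{t-1}$ is the one place needing care, but since both carry $c_s=s$ at leading order their contributions add linearly and the induction closes without incident.
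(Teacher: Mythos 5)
Your proposal is correct and follows essentially the same route as the paper's proof: induction on $t$, a first-order Taylor expansion of $g(\boldsymbol{x}+\beta\boldsymbol{\delta}_{t-2}+\gamma\boldsymbol{\delta}_{t-1})$, discarding all terms quadratic in $\boldsymbol{H}$, and summing the resulting arithmetic series to obtain $c_t$ and $d_t$ (the paper phrases the truncation term-by-term on the $\boldsymbol{g}_i = a_i\boldsymbol{g}+b_i\boldsymbol{Hg}$ summands and recovers $c_t, d_t$ by telescoping, while you substitute the compressed forms $\boldsymbol{\delta}_s = c_s\boldsymbol{g}+d_s\boldsymbol{Hg}$ directly, which is the same computation). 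Your closing remark on carrying the $O(\boldsymbol{H}^2)$ remainder to justify the approximation is a sensible refinement, as the paper itself only invokes the truncation informally via ``$\approx$''.
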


\begin{proof}
    We use mathematical induction to complete the proof.
    \begin{enumerate}
        \item When $t=2$, by \eqref{eq:delta_update} and \eqref{eq:g_update}, we have 
        \begin{equation}
            \begin{aligned}
                & a_2 &=&\ \ \ 1,\\
                & b_2 &=&\ \ \ \gamma,\\
                & c_2 &=&\ \ \ 2,\\
                & d_2 &=&\ \ \ \gamma.
            \end{aligned}
        \end{equation}
        \item Assuming that the update rule holds when $m = t \geq 2$, and when $m = t+1$ we have
        \begin{equation}
            \begin{aligned}
                & \boldsymbol{g}_{t+1} &=&\ \ g\left(\boldsymbol{x}+\beta\cdot\boldsymbol{\delta}_{t-1}+\gamma\cdot \boldsymbol{\delta}_{t}\right)\\
                & &=&\ \sum_{n=0}^{\infty}\frac{g^{(n)}(\boldsymbol{x})}{n!}\left(\boldsymbol{x}+\beta\cdot\sum_{i=1}^{t-1}\boldsymbol{g}_i+\gamma\cdot\sum_{i=1}^{t}\boldsymbol{g}_i-\boldsymbol{x}\right)^n\\
                & &\approx&\ \ \boldsymbol{g} +\beta\cdot\sum_{i=1}^{t-1}\boldsymbol{H}\boldsymbol{g}_i+\gamma\cdot\sum_{i=1}^{t}\boldsymbol{H}\boldsymbol{g}_i\\
                & &=&\ \ \boldsymbol{g} + (\beta+\gamma)\sum_{i=1}^{t-1}\boldsymbol{H}\boldsymbol{g}_i + \gamma\boldsymbol{H}\boldsymbol{g}_t\\
                & &=&\ \ \boldsymbol{g} + (\beta+\gamma)\sum_{i=1}^{t-1}\boldsymbol{H}(a_i\cdot\boldsymbol{g}+b_i\cdot\boldsymbol{Hg}) + \gamma \boldsymbol{H}(a_t\cdot\boldsymbol{g}+b_t\cdot\boldsymbol{Hg})\\
                & &\approx&\ \ \boldsymbol{g} + (\beta+\gamma)\sum_{i=1}^{t-1}a_i\cdot\boldsymbol{Hg} + \gamma\cdot a_t\boldsymbol{Hg}\\
                & &=&\ \ \boldsymbol{g} + \left((\beta+\gamma)\sum_{i=1}^{t-1}a_i+\gamma\cdot a_t\right)\boldsymbol{Hg},\\
            \end{aligned}
        \end{equation}
        where the first approximation holds as we ignore the high-order terms of $g^{(n)}\ (n\geq 2)$ and the second approximation holds as we ignore the term related to $\boldsymbol{HH}$. Then we have 
        \begin{equation}
            \begin{aligned}
                & a_{t+1} &=&\ \ 1,\\
                & b_{t+1} &=&\ \ (\beta+\gamma)\sum_{i=1}^{t-1}a_i+\gamma\cdot a_t = \beta\cdot(t-1)+\gamma\cdot t.    
            \end{aligned}
        \end{equation}
        Furthermore, 
        \begin{equation}
            \begin{aligned}
                & \boldsymbol{\delta}_{t+1} &=&\ \ \boldsymbol{\delta}_t + \boldsymbol{g}_{t+1}\\
                & &=&\ \ c_t\cdot\boldsymbol{g} + d_t\cdot\boldsymbol{Hg} + a_{t+1}\cdot\boldsymbol{g} + b_{t+1}\cdot\boldsymbol{Hg}\\
                & &=&\ \ (c_t+a_{t+1})\cdot\boldsymbol{g} + (d_t+b_{t+1})\cdot\boldsymbol{Hg}\\
                & &=&\ \ c_{t+1}\cdot\boldsymbol{g} + d_{t+1}\cdot\boldsymbol{Hg}
            \end{aligned}
        \end{equation}
        Then we have
        \begin{equation}
            \begin{aligned}
                & c_{t+1} - c_t &=&\ \ a_{t+1} = 1,\\
                & c_t - c_{t-1} &=&\ \ a_{t} = 1,\\
                & &\cdots&\ \ \\
                & c_3 - c_2 &=&\ \ a_3 = 1,\\
                & c_{t+1} &=&\ \ t+1.
            \end{aligned}
        \end{equation}
        Meanwhile, it holds
        \begin{equation}
            \begin{aligned}
                & d_{t+1} - d_t &=&\ \ b_{t+1},\\
                & d_t - d_{t-1} &=&\ \ b_t,\\
                & &\cdots&\ \ \\
                & d_3 - d_2 &=&\ \ b_3,
            \end{aligned}
        \end{equation}
        and
        \begin{equation}
            \begin{aligned}
                & d_{t+1} &=&\ \ \sum_{i=2}^{t+1} \beta(i-2) + \gamma(i-1)\\
                & &=&\ \ \frac{(t-1)(t-2)}{2}\beta+\frac{t(t-1)}{2}\gamma.
            \end{aligned}
        \end{equation}
    \end{enumerate}
\end{proof}

\goldbach*

\begin{proof}
    By \cite{DBLP:conf/iclr/WangRLZ0Z21}, the Shapley interactions between adversarial perturbation $i$ and $j$ is defined as
    \begin{equation}
        \boldsymbol{I}_{i,j}(\boldsymbol{\delta}_t) = \boldsymbol{\delta}_t(i)\cdot\boldsymbol{H}_{i,j}\cdot\boldsymbol{\delta}_t(j)+\mathcal{R}(\boldsymbol{\delta}_t(i),\boldsymbol{\delta}_t(j)),
    \end{equation}
    where $\boldsymbol{\delta}_t(i)$ represents the $i$-th element of $\boldsymbol{\delta}_t$, $\mathcal{R}(\cdot,\cdot)$ is the high order terms with respect to $\boldsymbol{\delta}_t(i)$ and $\boldsymbol{\delta}_t(j)$, and $\boldsymbol{H}_{i,j}$ is the element of the Hessian matrix in of $i$-th row and $j$-th column as
    \begin{equation}
        \boldsymbol{H}_{i,j} = \frac{\partial \boldsymbol{L}(\boldsymbol{x})}{\partial x_i \partial x_j},\ \boldsymbol{x} =(x_1,\ \dots,\ x_n)^\top.
    \end{equation}
    Then we have
    \begin{equation}
        \begin{aligned}
            & \boldsymbol{I}_{i,j}(\boldsymbol{\delta}_t) &\approx&\ \ (c_t\cdot \boldsymbol{g}(i) + d_t\cdot\boldsymbol{g}^\top\boldsymbol{H}_{*i})\boldsymbol{H}_{i,j}(c_t\cdot \boldsymbol{g}(j) + d_t\cdot\boldsymbol{g}^\top\boldsymbol{H}_{*j})\\
            & &=&\ \ c^2_t\cdot\boldsymbol{g}(i)\cdot\boldsymbol{g}(j)\cdot\boldsymbol{H}_{i,j} + c_t\cdot d_t\cdot\boldsymbol{g}(j)\cdot\boldsymbol{H}_{i,j}\cdot\boldsymbol{g}^\top\boldsymbol{H}_{*i}+c_t\cdot d_t\cdot\boldsymbol{g}(i)\cdot\boldsymbol{H}_{i,j}\cdot\boldsymbol{g}^\top\boldsymbol{H}_{*j}+\boldsymbol{\mathcal{O}}(\boldsymbol{H}^2)\\
            & &\approx&\ \ c^2_t\cdot\boldsymbol{g}(i)\cdot\boldsymbol{g}(j)\cdot\boldsymbol{H}_{i,j} + c_t\cdot d_t\cdot\boldsymbol{g}(j)\cdot\boldsymbol{H}_{i,j}\cdot\boldsymbol{g}^\top\boldsymbol{H}_{*i}+c_t\cdot d_t\cdot\boldsymbol{g}(i)\cdot\boldsymbol{H}_{i,j}\cdot\boldsymbol{g}^\top\boldsymbol{H}_{*j}.
        \end{aligned}
    \end{equation}
    Furthermore
    \begin{equation}
        \begin{aligned}
            & \boldsymbol{\mathbb{E}}_{i,j}[\boldsymbol{I}_{i,j}(\boldsymbol{\delta}_t)] &=&\ \  \boldsymbol{\mathbb{E}}_{i,j}[c^2_t\cdot\boldsymbol{g}(i)\cdot\boldsymbol{g}(j)\cdot\boldsymbol{H}_{i,j}+c_t\cdot d_t\cdot\boldsymbol{g}(j)\cdot\boldsymbol{H}_{i,j}\cdot\boldsymbol{g}^\top\boldsymbol{H}_{*i}+c_t\cdot d_t\cdot\boldsymbol{g}(i)\cdot\boldsymbol{H}_{i,j}\cdot\boldsymbol{g}^\top\boldsymbol{H}_{*j}]\\
            & &=&\ \ c_t^2\boldsymbol{\mathbb{E}}_{i,j}[\boldsymbol{g}(i)\cdot\boldsymbol{g}(j)\cdot\boldsymbol{H}_{i,j}]+2c_t\cdot d_t\cdot\boldsymbol{\mathbb{E}}_{i,j}[\boldsymbol{g}(i)\cdot\boldsymbol{H}_{i,j}\cdot\boldsymbol{g}^\top\boldsymbol{H}_{*j}]\\
            & &=&\ \ t^2 A + [t(t-1)(t-2)\beta+t^2(t-1)\gamma] B\\
            & &=&\ \ (\beta+\gamma)B\cdot t^3 + (A-3\beta B-\gamma B)\cdot t^2 + 2\beta B\cdot t
        \end{aligned}
    \end{equation}
    where
    \begin{equation}
        \begin{aligned}
            & A &=&\ \ \mathbb{E}_{i,j}[\boldsymbol{g}(i)\cdot\boldsymbol{g}(j)\cdot\boldsymbol{H}_{i,j}],\\
            & B &=&\ \ \boldsymbol{\mathbb{E}}_{i,j}[\boldsymbol{g}(i)\cdot\boldsymbol{H}_{i,j}\cdot\boldsymbol{g}^\top\boldsymbol{H}_{*j}].
        \end{aligned}
    \end{equation}
    Meanwhile, considering the updated rule of SAG, we have
    \begin{equation}
        \begin{aligned}
        & \boldsymbol{\mathbb{E}}_{i,j}[\boldsymbol{I}^{(t)}_{i,j}(\boldsymbol{\zeta}_t)]&=&\ \ t^2\boldsymbol{\mathbb{E}}_{i,j}[\boldsymbol{g}(i)\cdot\boldsymbol{g}(j)\cdot\boldsymbol{H}_{i,j}]+t^2(t-1)\cdot\boldsymbol{\mathbb{E}}_{i,j}[\boldsymbol{g}(i)\cdot\boldsymbol{H}_{i,j}\cdot\boldsymbol{g}^\top\boldsymbol{H}_{*j}]\\
        & &=&\ \ t^2\cdot A + t^2\cdot(t-1)\cdot B
        \end{aligned}
    \end{equation}
\end{proof}
\end{document}